\documentclass[twocolumn, journal]{IEEEtran}

\usepackage{amsmath,amssymb,amsfonts}
\usepackage{amsthm}
\usepackage{bm}
\usepackage{booktabs}
\usepackage{graphicx}
\usepackage{cite}
\usepackage{xcolor}
\usepackage{algorithm}
\usepackage{algorithmic}      
\usepackage{verbatim}         
\usepackage[hidelinks]{hyperref}

\newtheorem{theorem}{Theorem}
\newtheorem{lemma}{Lemma}
\newtheorem{assumption}{Assumption}

\title{Priority-Aware Learning-Unlearning Correction\\for Dynamic Decentralized LoRA Fine-Tuning}

\author{Nuocheng~Yang, Yechen He, \emph{Student Member, IEEE}, Sihua~Wang, Zihan~Chen, \emph{Member, IEEE}, \\ Tony Q. S. Quek, ~\IEEEmembership{Fellow,~IEEE}, and~Changchuan~Yin, \emph{Senior Member, IEEE}.
\thanks{N. Yang, Y. He, S. Wang, and C. Yin are with the Beijing Laboratory of Advanced Information Network, and the Beijing Key Laboratory of Network System Architecture and Convergence, Beijing University of Posts and Telecommunications, Beijing 100876, China (emails: \{yangnuocheng, heyechen, sihuawang, ccyin\}@bupt.edu.cn).}
\thanks{Z. Chen and T. Q.~S. Quek are with the Information Systems Technology and Design Pillar, Singapore University of Technology and Design, 487372, Singapore (emails: zihan\_chen@mymail.sutd.edu.sg, tonyquek@sutd.edu.sg).}
}

\begin{document}

\maketitle
\begin{abstract}
As large language models (LLMs) are increasingly deployed at the network edge to provide pervasive generative AI services, decentralized federated learning (DFL) provides a vital mechanism for privacy-preserving, domain-specific fine-tuning through peer-to-peer exchanges of parameter-efficient updates.
However, the dynamic nature of practical decentralized edge networks,  where devices may dynamically join or leave the collaborative training process, requires the system to continuously adapt to new data while selectively removing prior contributions. 
This correction process remains a significant bottleneck, as individual device updates become deeply entangled within the global fine-tuned parameters.
To address this challenge, we propose a priority-aware learning-unlearning correction framework based on orthogonal LoRA that can enhance the knowledge evaluation through topology adjustment.
Specifically, we first design an orthogonal LoRA mechanism that yields post-training contribution coordinates, enabling history-free projection addition and deletion in response to membership changes. 
We then analyze the correction bottleneck and develop a priority-aware policy that selects among topology refinement, local correction, proximal damping, and synchronization scheduling according to the dominant residual term.
A resource allocation algorithm is further developed to allocate limited communication across layer groups, prioritizing the primary bottlenecks within per-round wireless constraints.
Experiments demonstrate that the proposed framework achieves robust post-event correction for both device join and leave events and validate that different residual regimes necessitate distinct correction actions.
\end{abstract}

\begin{IEEEkeywords}
Decentralized learning, LoRA fine-tuning, machine unlearning, topology optimization, dynamic edge intelligence.
\end{IEEEkeywords}

\section{Introduction}
Large language models (LLMs) are reshaping intelligent services by providing powerful language understanding, reasoning, and generation capabilities across a wide range of applications, such as general-purpose task solving, code intelligence, autonomous agents, healthcare, and networking~\cite{GPT3,CoT,CodeT5Plus,LLMAgentSurvey,LargeLanguageModelsforNetworking}.
As LLM-enabled services increasingly operate over user-specific and domain-specific data at the network edge, adapting LLMs to distributed edge environments has become an important direction for next-generation edge intelligence. 
However, such adaptation is challenging because private-domain data are naturally distributed across edge devices, while centralized fine-tuning incurs prohibitive computation and communication costs and raises serious privacy concerns. 
Low-rank adaptation (LoRA)~\cite{LoRA} alleviates this burden by freezing the pretrained base model and training only lightweight low-rank adapter matrices, thereby substantially reducing per-device computation and communication overhead. 
Building on this advantage, collaborative LoRA fine-tuning has emerged as a practical paradigm in which devices fine-tune LLMs through local training and federated or decentralized exchange of LoRA updates, without sharing raw private data~\cite{LargeLanguageModelsforNetworking,MobileLLaMA,FederatedLargeLanguageModel,FederatedFineTuningofLLMs,chen2023role}.
Based on this, collaborative fine-tuning, however, is rarely conducted over a fixed set of devices.
In practical edge environments, devices join with new task-specific knowledge and leave due to mobility, energy constraints, or privacy withdrawal requests under regulations such as GDPR~\cite{sok_fu,fu_survey_liu}.
Each membership change alters the optimization objective. Newly joined devices introduce knowledge to be absorbed, while departing devices require their learned contributions to be removed.
The system must therefore correct the model after each event rather than merely continue training under a fixed objective.

This post-event correction faces three fundamental challenges.
First, the optimization target itself is unstable because each membership event shifts the optimal model, yet the limited per-round communication budget prevents the system from tracking this shift quickly, so the correction process accumulates error from outdated targets at every step.
Second, device parameters are tightly coupled in the shared LoRA adapter. Because individual contributions are inseparable after decentralized training, forgetting a departing device either leaves residual influence or inadvertently damages other devices' knowledge, while a newly joined device's parameters interfere with existing ones.
Third, convergence is inherently slow under communication constraints arising from the parameter-exchange topology design, where each round transmits only a small set of LoRA parameters over bandwidth-limited device-to-device links, requiring many rounds to converge, yet the system has a finite correction budget and cannot afford unbounded iterations.

A number of works extend parameter-efficient fine-tuning to distributed settings by introducing decentralized federated LoRA methods~\cite{FedALoRA,FedHeLLo,FedQuad,ghiasvandDecentralizedLowRankFineTuning2025,leeFedSVDAdaptiveOrthogonalization2025,yang2025lora}.
They design adaptive aggregation, heterogeneous LoRA allocation, orthogonalization, and quantization strategies to improve training efficiency under wireless constraints.
However, when devices dynamically join and leave, these methods encounter two concrete difficulties.
First, the shared adapter parameters form an indivisible aggregate, and there is no mechanism to isolate and extract a single device's contribution.
A departing device forces the system to either discard the entire adapter (losing all other devices' knowledge) or retain stale influence (violating the removal request).
Second, without a per-device contribution index, newly joined devices cannot be efficiently incorporated by identifying compatible directions in the existing adapter space, so the system must either retrain from scratch or risk interference between old and new knowledge.
Thus, in dynamic membership scenarios, these methods lack both the isolation granularity and the incorporation mechanism that post-event correction demands.
Departure-side methods, namely federated and decentralized unlearning, aim to remove client or data influence without full retraining~\cite{federaser,yeHeterogeneousDecentralizedMachineUnlearning2023,liuRethinkingMachineUnlearning2024}.
They propose model-agnostic meta-unlearning~\cite{federaser}, influence-function-based removal~\cite{fedu_influence}, knowledge distillation~\cite{kd_fedul}, and certified deletion protocols~\cite{certified_du_2026}.
LLM unlearning further studied to address the retain-forget tradeoff in language models~\cite{zhongUnlearningKnowledgeOverwriting2025}.
Yet, these unlearning methods assume infrastructure that is absent in a fully decentralized LoRA system.
Federated unlearning requires a central server to coordinate the forgetting process~\cite{fedrecover}. Without such a coordinator, the remaining devices cannot agree on a consistent removal target.
Historical update trajectories stored on a server enable precise influence estimation, but in a D2D-only network, each device retains only its current adapter state.
Auxiliary teacher models used for distillation-based forgetting are unavailable when devices share only low-rank adapters.
As a result, directly applying these protocols would either incur prohibitive synchronization overhead or produce inconsistent forgetting across devices, breaking the correctness guarantees that their centralized operation relies on.
Join-side methods address the complementary problem of absorbing new devices or tasks into an existing model.
Federated continual learning~\cite{fedweit,fccl} mitigates catastrophic forgetting when new clients with novel data distributions arrive, cold-start federated training~\cite{cold_start_fl} initializes newcomers without disrupting the existing model, and flexible participation protocols~\cite{wei2023flexible,ozkara2023distributed} analyze convergence under arbitrary client churn patterns.
In the parameter-efficient space, LoRA composition methods such as LoRAHub~\cite{LoRAHub} and orthogonal subspace merging~\cite{orthogonal_subspace_merging} enable modular combination of independently trained adapters.
However, these join methods address only one direction of membership change.
Federated continual learning and cold-start protocols focus exclusively on integrating new knowledge. When a device later departs, they offer no mechanism to precisely remove its contribution without retraining the remaining model from scratch.
LoRA composition methods merge adapters additively, so subtraction requires retaining the exact pre-merge checkpoint and the adapter being removed, information that decentralized devices do not store after aggregation.
More fundamentally, once adapters are aggregated through decentralized aggregation, individual contributions become inseparable regardless of how they were initially incorporated, so any join-only method is structurally incapable of supporting subsequent unlearning.

Topology-aware decentralized learning explains how network aggregation and gossip communication affect convergence~\cite{dfl_survey,lian2017can,topology_spectral_beyond}.
These methods characterize the relationship between the aggregation matrix's spectral gap and the optimization convergence rate, and they optimize communication topology to accelerate training under bandwidth constraints~\cite{OntheBenefitsofMultipleGossipStepsinCommunicationConstrainedDecentralizedFederatedLearning,DiameterConstrainedTopology}.
The critical limitation is that these topology methods are designed to accelerate consensus toward a fixed training target.
When a membership event changes the optimization objective, accelerating consensus toward the old target becomes actively harmful, since devices would converge faster to an outdated model.
Moreover, topology optimization allocates edges to maximize training throughput, but in post-event correction, communication resources must serve a different purpose, propagating the correction signal rather than the training signal.
In the nonconvex LoRA correction regime, dense topologies risk propagating locally biased updates across the network before local optimization can correct them, amplifying rather than reducing the post-event error.
Topology methods, therefore, lack the ability to adapt their communication topology to the dominant error source, which in post-event correction may be local optimization bias or client drift rather than consensus disagreement.


To fill these gaps, this paper proposes a priority-aware learning-unlearning correction framework for dynamic decentralized LoRA fine-tuning.
The core idea is to equip each device with a contribution identifier that persists across membership changes, so that individual device influences can be approximately isolated after decentralized training.
Under this identifier, a leaving device can be removed without historical gradient records, and a newly joined device can extend the existing model without overwriting other devices' knowledge.
We then formulate the post-event correction as an optimization problem under per-round communication constraints and analyze how local residual, consensus residual, and heterogeneity residual each contribute to the finite-round correction error.
This analysis reveals which error source dominates under different conditions, informing a corrective policy that allocates communication edges and local computation steps accordingly.

The main contributions are summarized as follows.
\begin{itemize}
\item We formulate a unified learning-unlearning event objective for dynamic, decentralized LoRA fine-tuning that captures both device join and leave events under per-round communication constraints.
\item We introduce a frozen random orthogonal LoRA projection mechanism that turns each device-specific basis into a post-training contribution index. For join events, the orthogonal basis naturally extends the existing subspace, allowing new knowledge to be incorporated without interfering with existing devices' coordinates. For leave events, this enables no-history projection deletion without affecting other devices' knowledge. 
\item We analyze the correction gap to identify that the dominant factors are per-group curvature and post event initial gap. We then transform these two quantities into computable rank-order priority proxies based on Fisher information for curvature, post-projection gradient energy for initial gap that can enable a priority-based allocation of local steps, proximal damping, and mixing density across layer groups without solving the full bound.
\end{itemize}
\section{Related Work}
\subsection{Networked LLM and LoRA-Based Collaborative Fine-Tuning}
Networked LLM studies establish why LLM adaptation is becoming a communication-system problem. The authors in \cite{LargeLanguageModelsforNetworking,MobileLLaMA,FederatedLargeLanguageModel,FederatedFineTuningofLLMs} investigated LLMs for networking, mobile network analysis, and federated LLM fine-tuning. The authors in \cite{Communication-EfficientWirelessFederatedFine-TuningforLarge-ScaleAIModels,DataDivergenceawareClientSelection,chen2025zeroth} further considered wireless fine-tuning and client selection for LLMs. The common assumption in these studies is that the main difficulty is efficient training or serving over a network. In a dynamic decentralized system, however, the device population itself changes, so the model must be corrected after join and leave events rather than only trained under a fixed population.
Parameter-efficient LoRA methods reduce the cost of such collaborative adaptation. The authors in \cite{LoRA} introduced low-rank adaptation by freezing the base model and updating a small set of adapter parameters. The authors in \cite{FedALoRA,FedHeLLo,FedQuad,ADFlora} proposed adaptive aggregation, heterogeneous LoRA allocation, layer-wise deployment, and alternating low-rank aggregation for federated or decentralized fine-tuning. The authors in \cite{leeFedSVDAdaptiveOrthogonalization2025,FedQLoRA,adaptive_fedlora,lora_fair} further studied orthogonalization, quantization-aware LoRA, wireless heterogeneous LoRA, and fair aggregation. These works make collaborative LLM fine-tuning feasible, but the adapter coordinates are designed for training efficiency, not for identifying which part of the trained adapter belongs to a leaving device.
Adapter heterogeneity and interference have also been studied in multi-task and peer-to-peer settings. The authors in \cite{FlyLoRA,THANORA,LoRI,OMoE,FFTMoE} studied rank-wise mixture, task-heterogeneity-aware adaptation, cross-task interference, orthogonal LoRA mixtures, and sparse foundation-model fine-tuning. The authors in \cite{ghiasvandDecentralizedLowRankFineTuning2025,dec_lora,LearningtoCollaborate,TokenLevelLLMCollaboration} investigated decentralized low-rank fine-tuning and peer-to-peer LLM collaboration, while \cite{yang2025lora} considered sparse-and-orthogonal LoRA for wireless multi-task LLM fine-tuning. These studies show that adapter subspaces are not neutral averaging variables. This paper uses that insight differently: the LoRA basis is fixed as a contribution coordinate so that membership-driven unlearning can be followed by decentralized correction.
\subsection{Federated, Decentralized, and LLM Unlearning}
Federated unlearning aims to remove client or data influence without full retraining. The authors in \cite{federaser,kd_fedul,fedrecover,fedu_influence} proposed client-level removal, knowledge-distillation-based unlearning, recovery from historical updates, and influence-approximation-based federated unlearning. The surveys in \cite{sok_fu,fu_survey_liu} summarized federated unlearning methods and open challenges. These methods clarify the deletion objective, but many of them rely on a server, stored update trajectories, retraining-like recovery, or an auxiliary teacher. These requirements are difficult to satisfy when devices only hold their current decentralized LoRA adapters.
Decentralized unlearning removes the central server assumption and is therefore closer to the target scenario. The authors in \cite{gong2021bayesian} studied Bayesian variational federated learning and unlearning in decentralized networks. The authors in \cite{yeHeterogeneousDecentralizedMachineUnlearning2023,certified_du_2026,rr_du_2025,linDecentralizedUnlearningTrustworthy2024} studied heterogeneous decentralized unlearning, certified decentralized unlearning, fully decentralized certified unlearning, and decentralized AIGC-service unlearning. These works demonstrate that unlearning can be performed without a single coordinator, but they often add distillation, certification, ensemble, or trust structures. In contrast, this paper focuses on a lighter LoRA correction setting where the available information is the current adapter state and the frozen device bases.
LLM unlearning introduces an additional retain-forget conflict. The authors in \cite{liuRethinkingMachineUnlearning2024,zhongUnlearningKnowledgeOverwriting2025,hfu_llm,fade,o3_continual_unlearning} studied LLM unlearning, sparse-adapter knowledge overwriting, hierarchical federated LLM unlearning, selective forgetting, and continual LLM unlearning. Training-free and influence-based ideas in \cite{huynhFastFedULTrainingFree2024,influence_functions} further motivate approximate deletion without full retraining. These works support the need to compare the post-unlearning model with a retraining or retain-aware oracle. What remains open is how to connect this retain-forget objective with decentralized topology and finite-round LoRA correction after membership events.
\subsection{Decentralized Learning, Gossip, and Topology-Aware Optimization}
Decentralized learning provides the algorithmic basis for serverless model adaptation. The authors in \cite{fedavg,dfl_survey,lian2017can,Lalitha2018FullyDF} studied communication-efficient federated learning, decentralized federated learning fundamentals, decentralized stochastic gradient methods, and fully decentralized federated learning. The authors in \cite{ProceedingsIEEE,HVPoor,DecentralizedFederatedLearningBalancingCommunicationandComputingCosts} analyzed communication-computation tradeoffs and decentralized learning costs. These works show that local computation and communication are coupled, but the analyzed objective is typically a stationary training loss.
Gossip and topology-aware optimization explain the consensus part of decentralized convergence. The authors in \cite{topology_spectral_beyond,OntheBenefitsofMultipleGossipStepsinCommunicationConstrainedDecentralizedFederatedLearning,DiameterConstrainedTopology,Expandergraph} studied topology effects beyond spectral gap, multiple gossip steps, diameter-constrained topology orchestration, and expander-graph-based decentralized optimization. The authors in \cite{PeertoPeerVariationalFederatedLearningOverArbitraryGraphs,DecentralizedandModelFreeFederatedLearningConsensusBasedDistillationinFunctionSpace,DecentralizedFederatedLearningBalancingCommunicationandComputingCosts} investigated peer-to-peer variational learning, consensus-based distillation, and communication-computation tradeoffs over arbitrary networks. These studies justify using topology when disagreement dominates, but they also imply a limitation: faster aggregation is only one correction action and may not solve local drift or retain-forget conflict.
Recent decentralized systems further consider heterogeneity, reliability, and resource-aware topology design. The authors in \cite{topology_dfl_bandwidth,topology_dfl_energy,FedDual,EdgeBasedCommunicationOptimization,AcceleratingDFLinHEC} studied bandwidth-aware topology optimization, energy-efficient topology design, pair-wise gossip, edge-based communication optimization, and acceleration in heterogeneous edge computing. The authors in \cite{FutureInternet,muDFL,ReputationAwareHedonicCoalitionFormationforEfficientServerlessHierarchicalFederatedLearning,OpportunitiesofFederatedLearninginConnectedCooperativeandAutomatedIndustrialSystems,ByzantineAttack} considered fairness-aware peer-to-peer learning, secure microchained DFL, reputation-aware coalitions, connected industrial systems, and Byzantine-resilient decentralized SGD. 
\subsection{Wireless Resource Allocation for Networked Learning}
Wireless edge learning turns correction design into a constrained resource-allocation problem. The authors in \cite{wang2023towards,AJointLearning,Wireless_Communications_for_Collaborative_Federated_Learning,CGHS} studied cooperative federated learning over heterogeneous edge/fog networks, joint learning-communication design, wireless communications for collaborative federated learning, and distributed learning over wireless networks. The authors in \cite{liuCommunicationEnergyEfficient2023,yang2024gnn,yang2024bayesian} further studied communication-energy-efficient decentralized learning, graph-neural-network-based collaborative FL energy optimization, and secure distributed Bayesian FL. These works show that wireless constraints cannot be treated as an afterthought, but the learning task remains standard model training rather than membership-event learning-unlearning correction.
Graph-based wireless optimization provides scalable tools for link and resource decisions. The authors in \cite{GraphNeuralNetworksforWirelessCommunicationsFromTheorytoPractice,GraphEmbedding_BasedWirelessLinkSchedulingWithFewTrainingSamples,AGNN_BasedSupervisedLearningFrameworkforResourceAllocationinWirelessIoTNetworks,WirelessLinkSchedulingviaGraphRepresentationLearningAComparativeStudyofDifferentSupervisionLevels} studied graph neural networks for wireless communications, graph-embedding-based link scheduling, GNN-based resource allocation, and supervision levels for wireless link scheduling. The authors in \cite{CooperativeTrajectoryDesignofMultipleUAVBaseStationsWithHeterogeneousGraphNeuralNetworks,OptimalWirelessResourceAllocationWithRandomEdgeGraphNeuralNetworks,GraphNeuralNetworksforScalableRadioResourceManagementArchitectureDesignandTheoreticalAnalysis} studied UAV trajectory design, random-edge GNN resource allocation, and scalable radio resource management. These methods can optimize links once the network objective is known, but they do not specify whether a post-event LoRA system should prioritize edges, local steps, proximal damping, or synchronization.
Hierarchical and mobility-aware systems further illustrate why dynamic correction is necessary. The authors in \cite{NetworkSupportforHigh_PerformanceDistributedMachineLearning,Fedave,HierarchicalFederatedLearningACROSSHeterogeneousCellularNetworks,DecentralizedWirelessFederatedLearningWithDifferentialPrivacy,Decentralizedfederatedlearningthroughproxymodelsharing} studied network support for distributed machine learning, vehicular edge FL, hierarchical FL, decentralized wireless FL with differential privacy, and proxy-model sharing. These works capture realistic deployment pressures, while this paper focuses on the missing correction logic: when the membership changes, the wireless network should support not only faster training communication but also faster learning-unlearning recovery.
\begin{table*}[t]
\centering
\caption{Comparison between representative related works and the considered dynamic decentralized LoRA learning-unlearning correction problem.}
\label{tab:tnse_related_comparison}
\footnotesize
\resizebox{\textwidth}{!}{%
\begin{tabular}{p{2.4cm}p{4.2cm}cccccp{4.4cm}}
\toprule
Work category & Representative works & LLM/LoRA & Dynamic join/leave & Unlearning & No-history removal & Decentralized constraint & Main limitation for this paper \\
\midrule
Federated and decentralized LoRA fine-tuning &
\cite{LoRA,FedALoRA,FedHeLLo,FedQuad,ADFlora,ghiasvandDecentralizedLowRankFineTuning2025,leeFedSVDAdaptiveOrthogonalization2025,yang2025lora} &
Yes & Limited & No & No & Partial &
They reduce adapter training and communication costs, but do not provide a post-training device contribution index for leave events. \\
\midrule
LLM collaboration and adapter interference control &
\cite{FlyLoRA,THANORA,LoRI,OMoE,FFTMoE,LearningtoCollaborate,TokenLevelLLMCollaboration,yang2025lora} &
Yes & Limited & Limited & No & Partial &
They study collaboration, mixture, and interference in adapter spaces, but do not formulate dynamic learning-unlearning correction. \\
\midrule
Federated and decentralized unlearning &
\cite{federaser,kd_fedul,fedrecover,fedu_influence,sok_fu,fu_survey_liu,gong2021bayesian,yeHeterogeneousDecentralizedMachineUnlearning2023,certified_du_2026,rr_du_2025} &
Partial & Limited & Yes & Usually no & Partial &
They remove data or client influence, but often require history, server-side control, distillation, certification, or retraining-like signals. \\
\midrule
LLM and adapter unlearning &
\cite{liuRethinkingMachineUnlearning2024,zhongUnlearningKnowledgeOverwriting2025,hfu_llm,fade,o3_continual_unlearning,huynhFastFedULTrainingFree2024,linDecentralizedUnlearningTrustworthy2024} &
Yes & Limited & Yes & Partial & Limited &
They address retain-forget behavior, but do not jointly handle device-level contribution indexing and decentralized finite-round correction. \\
\midrule
Decentralized learning and gossip topology &
\cite{fedavg,dfl_survey,lian2017can,Lalitha2018FullyDF,topology_spectral_beyond,OntheBenefitsofMultipleGossipStepsinCommunicationConstrainedDecentralizedFederatedLearning,DiameterConstrainedTopology,Expandergraph} &
No/Partial & Partial & No & No & Yes &
They explain consensus and topology effects for fixed training objectives, but not post-event learning-unlearning residual decomposition. \\
\midrule
Wireless networked learning and resource allocation &
\cite{wang2023towards,AJointLearning,Wireless_Communications_for_Collaborative_Federated_Learning,CGHS,liuCommunicationEnergyEfficient2023,yang2024gnn,GraphNeuralNetworksforWirelessCommunicationsFromTheorytoPractice,GraphEmbedding_BasedWirelessLinkSchedulingWithFewTrainingSamples} &
Partial & Partial & No & No & Yes &
They optimize communication, energy, and wireless links, but do not decide whether topology or local correction should be prioritized after membership events. \\
\midrule
Proposed framework &
This paper &
Yes & Yes & Yes & Yes & Yes &
It uses frozen orthogonal LoRA bases for no-history contribution indexing and a Fisher-driven priority allocation framework that distributes correction resources across layer groups. \\
\bottomrule
\end{tabular}
}
\end{table*}
\section{System Model and Problem Formulation}
\begin{figure}[t]
\centering
\includegraphics[width=\columnwidth]{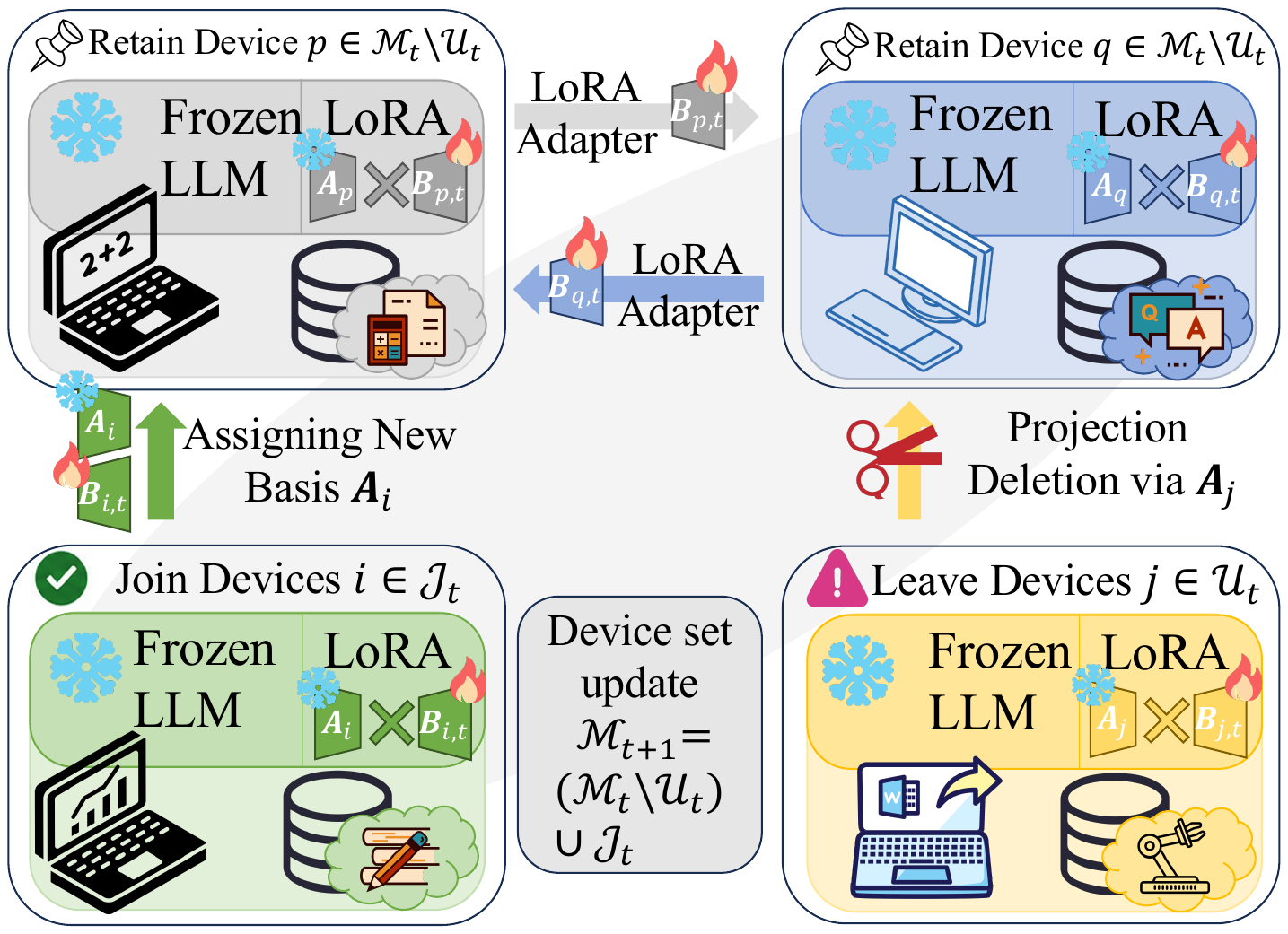}
\caption{Illustration of the considered dynamic DFL framework.}
\label{fig:sysmodel}
\end{figure}
\subsection{Dynamic Decentralized LoRA System}
Consider a distributed wireless network in which a set of edge devices collaboratively fine-tune an $L$-layer LLM without relying on a central parameter server. 
Devices exchange only the fine-tuned LoRA adapter parameters via device-to-device links, rather than raw data.
Unlike the traditional system model, which assumes a fixed set of devices that attend to collaborative training, the members who attend collaborative fine-tuning change across training epochs. 
Specifically, at epoch $t$, the active devices' set is $\mathcal{M}_t=\{1,2,\ldots,M_t\}$.
Each device $i\in\mathcal{M}_t$ holds a private local dataset $\mathcal{D}_{i,t}$ which may correspond to different downstream tasks or data distributions. 
To reduce the computational overhead of full-parameter training, a LoRA-based method is introduced, where each device freezes the $L$ layer pretrained model $\bm{\theta}_0=[\bm{\theta}_{0,1},\cdots,\bm{\theta}_{0,L}]$ and trains an additional low-rank adapter per layer.
For layer $\ell$ with pre-trained parameter $\bm{\theta}_{0,\ell} \in \mathbb{R}^{d_\ell \times k_\ell}$, the low-rank adapted parameter is given by
\begin{equation}
    \Delta \bm{W}_{i,t}^{(\ell)} = \bm{B}_{i,t}^{(\ell)}\bigl(\bm{A}_i^{(\ell)}\bigr)^\top,
    \label{eq:localLoRA}
\end{equation}
Specifically, in the proposed method, $\bm{A}_i^{(\ell)} \in \mathbb{R}^{d_\ell \times r_\ell}$ is a device-specific, frozen projection matrix sampled independently from a zero-mean, unit-variance Gaussian distribution.
$\bm{B}_{i,t}^{(\ell)} \in \mathbb{R}^{k_\ell \times r_\ell}$ is the trainable expansion matrix and $r_\ell \ll \min(d_\ell, k_\ell)$.
Since $\bm{A}_i^{(\ell)}$ is frozen, only $\bm{B}_{i,t}^{(\ell)}$ is trained and exchanged between devices during the collaborative fine-tuning process.
For an local stored input-output data pair $(x,y) \in \mathcal{D}_{i,t}$, the autoregressive loss under $\bm{A}_i$ and $\bm{B}_{i,t}$ is given by
\begin{equation}
    \ell(y|x;\bm{\theta}_0,\bm{A}_i,\bm{B}_{i,t})
    = -\sum_{s=1}^{|y|} \log p_{\bm{\theta}_0,\bm{A}_i,\bm{B}_{i,t}}(y_s|x,y_{<s}),
\end{equation}
where $\bm{\theta}_0 = [\bm{\theta}_{0,1},\dots,\bm{\theta}_{0,L}]$, $\bm{A}_i = [\bm{A}_i^{(0)},\dots,\bm{A}_i^{(L)}]$, and $\bm{B}_{i,t} = [\bm{B}_{i,t}^{(0)},\dots,\bm{B}_{i,t}^{(L)}]$.
The fine-tuned $\ell$-th layer's parameter is $\bm{W}_{i,t}^{(\ell)} = \bm{\theta}_{0,\ell} + \Delta \bm{W}_{i,t}^{(\ell)}$.
The local training objective of device $i$ is
\begin{equation}
\mathcal{L}_{i,t}^{\mathrm{tr}}(\bm{B}_{i,t};\mathcal{D}_{i,t})
=
\mathbb{E}_{(x,y)\in\mathcal{D}_{i,t}}
\left[
\ell(y|x;\bm{\theta}_0,\bm{A}_i,\bm{B}_{i,t})
\right].
\end{equation}
During decentralized LoRA fine-tuning, each device alternates between $n_{\ell}$ steps of local updates and a model aggregation step from neighbors. 
Each local step updates the expansion matrix using gradient descent.

\subsection{Model Transmission and Aggregation}
After local fine-tuning, devices will exchange their LoRA parameters with a subset of neighbors.
Let $\mathcal{E}_t^{(\ell)}$ be the undirected model exchange edge set selected for layer $\ell$ at round $t$, and let $d_{i,t}^{(\ell)}$ be the degree of device $i$ in this graph (i.e., the number of neighbors to exchange model parameter). 
Device $i$'s model aggregated from neighbors through model exchange edge set $\mathcal{E}_t^{(\ell)}$ can be given by
\begin{equation}
\bm{B}_{i,t}^{(\ell)} = \sum_{j\in\mathcal{M}_{t}} \left[\bm{W}_t^{(\ell)}(\gamma_\ell)\right]_{i,j}\,\bm{B}_{j,t}^{(\ell)}.
\label{eq:aggregation_step_sys}
\end{equation}
The damped aggregation matrix is given by
\begin{equation}
\bm{W}_t^{(\ell)}(\gamma_\ell) = (1-\gamma_\ell)\bm{I} + \gamma_\ell\bm{W}_t^{(\ell)},
\label{eq:aggregation}
\end{equation}
where $\gamma_\ell\in[0,1]$ controls the blending between purely local correction ($\gamma_\ell=0$) and full neighbor aggregation ($\gamma_\ell=1$), and the base Metropolis matrix $\bm{W}_t^{(\ell)} = [w_{i,j,t}^{(\ell)}]_{i,j\in\mathcal{M}_{t+1}}$ can be given by
\begin{equation}
w_{i,j,t}^{(\ell)}
=
\begin{cases}
\dfrac{1}{\max\{d_{i,t}^{(\ell)},d_{j,t}^{(\ell)}\}}, & \text{ if } (i,j)\in\mathcal{E}_t^{(\ell)},\ i\neq j,\\[8pt]
1-\sum\limits_{m\in\mathcal{M}_t,m\neq i}w_{i,m,t}^{(\ell)}, & \text{ if } i=j,\\[8pt]
0, & \text{otherwise}.
\end{cases}
\end{equation}
Here, we can see that the design variable is the feasible edge set $\mathcal{E}_t^{(\ell)}$ and once it is selected, all model aggregation weights are determined.

\textcolor{black}{We further define the correction schedule of layer $\ell$ as $\Pi_\ell = (n_\ell,\eta_\ell,\lambda_\ell^{\mathrm{prox}},\gamma_\ell)$.
$\Pi_\ell$ consists of four quantities, which include the number of local steps $n_\ell$, the step size $\eta_\ell$, the proximal coefficient $\lambda_\ell^{\mathrm{prox}}$, and the aggregation strength $\gamma_\ell$, collectively specifying the correction schedule for layer $\ell$.
Let $s_\ell$ be the data size of LoRA scalars of layer $\ell$, and let $q_b$ be the number of quantization bits per scalar.
The data set of device $i$ transmitting its model to device $j$ is
$b_{i,j,t}^{(\ell)} = q_b s_\ell$. }

We adopt the OFDMA scheme for wireless model transmission.
The achievable transmission rate from device $i$ to device $j$ at round $t$ is given by
\begin{equation}
R_{i,j,t}
=
B_{i,j,t}^{\mathrm{ch}}
\log_2
\left(
1+
\frac{
p_{i,t}h_{i,j,t}
}{
\sigma^2+
\sum_{m\in\mathcal{I}_{j,t}}
p_{m,k}h_{mj,t}
}
\right),
\end{equation}
where $B_{i,j,t}^{\mathrm{ch}}$ is the available bandwidth, $p_{i,t}$ is the transmit power, $h_{i,j,t}$ being the channel gain. 
$\sigma^2$ is the noise power, and $\mathcal{I}_{j,t}$ being co-channel interference set.
The per-edge communication cost combines payload and transmission latency into a single budget measure as
\begin{equation}
c_{i,j,t}^{(\ell)}
=
b_{i,j,t}^{(\ell)}
+
\beta_{\tau}\frac{b_{i,j,t}^{(\ell)}}{R_{i,j,t}},
\end{equation}
where $\beta_{\tau}\geq0$ converts latency into the same budget unit as payload, $b_{i,j,t}^{(\ell)}$ is the transmission data size.

\subsection{Unified Membership Event Objective}
After the local training and aggregation stage, the system may experience a membership event. 
Specifically, at epoch $t$, a set $\mathcal{J}_t$ of $J_t$ devices joins the system with fully initialized model parameters to train on their local datasets. 
There may also be a set $\mathcal{U}_t$ of $U_t$ devices that leave the system and request the removal of their contributions from the model parameters that are already trained.
In this scenario, the active device set at epoch $t+1$ can be given by
\begin{equation}
\mathcal{M}_{t+1}=(\mathcal{M}_t\setminus\mathcal{U}_t)\cup\mathcal{J}_t.
\end{equation}
This membership event is denoted by $e_t=(\mathcal{J}_t,\mathcal{U}_t)$.
Let $\mathcal{D}_{\mathrm{ret}}(e_t)$, $\mathcal{D}_{\mathrm{join}}(e_t)$, and $\mathcal{D}_{\mathrm{for}}(e_t)$ denote the retain, join, and forget sets induced by event $e_t$. 
For the forget set, $y_f$ denotes the desired post-unlearning response, such as a refusal, neutral, or sanitized target response. 
Thus, the final event objective is
\begin{align}
\mathcal{L}_{\mathrm{evt}}^{(e_t)}(\bm{B})
=&
\mathbb{E}_{(x,y)\in\mathcal{D}_{\mathrm{ret}}(e_t)}
\left[
\ell(y|x;\bm{\theta}_0,\bm{A},\bm{B})
\right]
\nonumber\\
&+
\lambda_j
\mathbb{E}_{(x,y)\in\mathcal{D}_{\mathrm{join}}(e_t)}
\left[
\ell(y|x;\bm{\theta}_0,\bm{A},\bm{B})
\right]
\nonumber\\
&+
\lambda_f
\mathbb{E}_{(x,y_f)\in\mathcal{D}_{\mathrm{for}}(e_t)}
\left[
\ell(y_f|x;\bm{\theta}_0,\bm{A},\bm{B})
\right],
\end{align}
where $\lambda_j\geq0$ and $\lambda_f\geq0$ control the learning and forgetting terms. If the event contains only a leave request, $\lambda_j=0$, if it contains only a join request, $\lambda_f=0$.

Let
$\bm{B}_{e_t}^{\star}
=
\arg\min_{\bm{B}}
\mathcal{L}_{\mathrm{evt}}^{(e_t)}(\bm{B})$
be the optimal adapter.
In particular, the oracle cannot be computed in the decentralized setting, so the goal is to approximate it via finite-round decentralized gradient descent (DGD) correction.
The event correction gap after $K$ DGD rounds is
\begin{equation}
\mathcal{E}_{\mathrm{evt}}(e_t,K)
=
\mathcal{L}_{\mathrm{evt}}^{(e_t)}(\bar{\bm{B}}_K)
-
\mathcal{L}_{\mathrm{evt}}^{(e_t)}(\bm{B}_{e_t}^{\star}),
\end{equation}
where $\bar{\bm{B}}_K = (\bar{\bm{B}}_K^{(1)},\dots,\bar{\bm{B}}_K^{(L)})$ and $\bar{\bm{B}}_K^{(\ell)} = \frac{1}{|\mathcal{M}_{t+1}|}\sum_{i\in\mathcal{M}_{t+1}} \bm{B}_{i,K}^{(\ell)}$ is the device-averaged expansion factor at layer $\ell$.

\textbf{Remark.}
Note that the forget term cannot be minimized directly by DGD, since the departing devices are unavailable.

\subsection{Problem Formulation}
We formulate our optimization problem whose goal is to minimize the event correction gap within $K$ correction rounds while jointly considering communication constraints by adjusting $\Pi_\ell$ and feasible communication graphs $\mathcal{E}_t^{(\ell)}$. 
The optimization problem is formulated as
\begin{equation}\label{eq:max1}
\min_{\{\Pi_\ell,\mathcal{E}_t^{(\ell)}\}}
\;
\mathcal{E}_{\mathrm{evt}}(e_t,K)
\end{equation}

\begin{align}\label{c1}
\mathrm{s.t.}\quad
&
\sum_{\ell=1}^{L}
\sum_{(i,j)\in\mathcal{E}_t^{(\ell)}}
c_{i,j,t}^{(\ell)}
\leq
C_t,
\quad
\forall t, \tag{\theequation a} \\
&
\frac{b_{i,j,t}^{(\ell)}}{R_{i,j,t}}
\leq
\tau_t^{\max},
\quad
\forall (i,j),g,t,
\tag{\theequation b}
\end{align}
where 
$C_t$ is the available per-round communication budget, and $\tau_t^{\max}$ is the maximum tolerable link latency in round $t$. 
The budget is not accumulated across rounds; every correction round must satisfy its own wireless constraint.
The optimization problem is hard to solve due to the following reasons.
First, the optimization problem is a mixed-integer program whose objective depends on the unknown event oracle $\bm{B}_{e_t}^{\star}$ and involves discrete graph variables coupled with continuous policy parameters.
Second, event membership causes unstable object changes, which introduce unstable training and parameter exchange.
Further, since the departing devices are unavailable, a data-free unlearning algorithm needs to be designed.

\section{Problem Analysis and Proposed Method}
To address problem (\ref{eq:max1}), in this section, we need to analyze the relationship between the event correction gap and derive a residual-level rule for selecting corrective actions.
Further, since the fine-tuning forget term cannot be minimized directly via gradient descent, we first propose a model initialization method that enables unlearning without using unlearning data.
\subsection{Orthogonal Projection Matrix Mechanism and Post-Event Initialization}
We begin by introducing a novel LoRA mechanism that enables a fast post-event initialization without storing the history gradient, which is commonly used in traditional works.
In particular, based on (\ref{eq:localLoRA}), each device $i$ is initialized with a frozen random orthogonal projection basis $\bm{A}_i^{(\ell)}\in\mathbb{R}^{d_\ell\times r_\ell}$ for each layer $\ell$ and remains fixed during training and aggregation, which can benefit the unlearning process.
To establish this, we first make the following assumption that formalizes the projection basis structure.
\begin{assumption}[Frozen random orthogonal bases]
For each device $i$ and layer $\ell$, $\bm{A}_i^{(\ell)}\in\mathbb{R}^{d_\ell\times r_\ell}$ has independently sampled elements and remains frozen during training and correction.
\end{assumption}
Then, we can analyze the impact between projection metric as the following Lemma.
\begin{lemma}[Orthogonal projection leakage]
Under Assumption 1, if $\bm{A}_i^{(\ell)}$ and $\bm{A}_j^{(\ell)}$ are independently generated by Gaussian sampling, then
\begin{equation}
\mathbb{E}
\left[
\left\|
\left(\bm{A}_i^{(\ell)}\right)^{\top}
\bm{A}_j^{(\ell)}
\right\|_F^2
\right]
=
\frac{r_\ell^2}{d_\ell}\approx \bm{0}, \text{ if } r_\ell \ll d_\ell.
\end{equation}
\end{lemma}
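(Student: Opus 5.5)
The plan is to expand the Frobenius norm entrywise and use independence together with the second moments of Gaussian entries. One point must be fixed first. The stated value $r_\ell^2/d_\ell$ corresponds to the normalization in which each entry of $\bm{A}_i^{(\ell)}$ has variance $1/d_\ell$, so that each column has unit squared norm in expectation and the basis is approximately orthonormal. With literally unit-variance entries the same computation gives $r_\ell^2 d_\ell$, because every inner product then carries an extra factor $d_\ell^2$. I will therefore take the entries to be i.i.d. $\mathcal{N}(0,1/d_\ell)$ and note the rescaling explicitly. This is consistent with the ``orthogonal bases'' reading of Assumption 1 and does not change the span of the basis.

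Write $\bm{a}_p$ and $\bm{b}_q$, $p,q\in\{1,\dots,r_\ell\}$, for the columns of $\bm{A}_i^{(\ell)}$ and $\bm{A}_j^{(\ell)}$. Then
\begin{equation*}
\left\|\left(\bm{A}_i^{(\ell)}\right)^{\top}\bm{A}_j^{(\ell)}\right\|_F^2=\sum_{p=1}^{r_\ell}\sum_{q=1}^{r_\ell}\left(\bm{a}_p^{\top}\bm{b}_q\right)^2,
\end{equation*}
so by linearity it suffices to compute $\mathbb{E}[(\bm{a}_p^\top\bm{b}_q)^2]$ for a single pair $(p,q)$.

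Expanding the square gives
\begin{equation*}
\left(\bm{a}_p^\top\bm{b}_q\right)^2=\sum_{m,n} a_{mp}a_{np}\,b_{mq}b_{nq}.
\end{equation*}
The two matrices are independent, so each expectation factorizes as $\mathbb{E}[a_{mp}a_{np}]\,\mathbb{E}[b_{mq}b_{nq}]$. For $m\neq n$ this product vanishes because the entries are zero-mean and mutually independent. For $m=n$ it equals $(1/d_\ell)^2$. Summing the $d_\ell$ diagonal terms gives $\mathbb{E}[(\bm{a}_p^\top\bm{b}_q)^2]=1/d_\ell$. Summing over the $r_\ell^2$ pairs $(p,q)$ then yields $r_\ell^2/d_\ell$.

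The approximation $\approx 0$ for $r_\ell\ll d_\ell$ is read as the statement that $r_\ell^2/d_\ell$ is small in that regime, for example when $r_\ell=o(\sqrt{d_\ell})$. I will state this relative to the self-overlap
\begin{equation*}
\mathbb{E}\left\|\left(\bm{A}_i^{(\ell)}\right)^\top\bm{A}_i^{(\ell)}\right\|_F^2=r_\ell+\frac{r_\ell(r_\ell+1)}{d_\ell}\approx r_\ell,
\end{equation*}
which follows from the same expansion using the fourth moment $3/d_\ell^2$ on the diagonal entries. The normalized leakage is then $r_\ell/d_\ell\to 0$ whenever $r_\ell\ll d_\ell$, which is the sense in which the bases of different devices are nearly orthogonal.

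The computation itself is routine. The main obstacle is only the scaling convention, and resolving it in favour of the $1/d_\ell$ normalization is what makes the lemma's constant come out exactly.
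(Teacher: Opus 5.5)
Your computation is correct for the model you adopt, but you resolve the normalization issue differently from the paper. The paper's proof takes $\bm{A}_i^{(\ell)}$ to be obtained by \emph{orthonormalizing} a Gaussian matrix, so it is Haar-distributed with exactly orthonormal columns. It then writes $\|(\bm{A}_i^{(\ell)})^{\top}\bm{A}_j^{(\ell)}\|_F^2=\sum_{p}\|(\bm{A}_j^{(\ell)})^{\top}\bm{a}_{i,p}^{(\ell)}\|_2^2$ and uses rotational invariance, $\mathbb{E}[\bm{P}_j^{(\ell)}]=(r_\ell/d_\ell)\bm{I}$, so each unit column contributes exactly $r_\ell/d_\ell$. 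You instead keep independent entries rescaled to variance $1/d_\ell$ and do an entrywise second-moment expansion.

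Your route is more elementary: it needs only independence and second moments, with no Haar measure. Two of your remarks are also legitimate corrections to the statement as written. First, literal unit variance gives $r_\ell^2 d_\ell$. Second, $r_\ell^2/d_\ell$ is small only when $r_\ell=o(\sqrt{d_\ell})$, whereas the normalized leakage $r_\ell/d_\ell$ needs just $r_\ell\ll d_\ell$.

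What your model gives up is exact orthonormality, which the paper uses immediately afterwards. Under your model $\bm{A}_u^{(\ell)}(\bm{A}_u^{(\ell)})^{\top}$ is not an exact projector. The step $\|(\bm{A}_j^{(\ell)})^{\top}\|_2=1$ in the proof of Lemma 2 then holds only approximately, since the top singular value concentrates near $1+\sqrt{r_\ell/d_\ell}$.

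Your argument transfers to the paper's model with one substitution. Instead of entrywise independence, use that each column $\bm{b}_q$ of a Haar-distributed $\bm{A}_j^{(\ell)}$ is uniform on the unit sphere and independent of $\bm{a}_p$. Then $\mathbb{E}[\bm{b}_q\bm{b}_q^{\top}]=\bm{I}/d_\ell$ and $\mathbb{E}[(\bm{a}_p^{\top}\bm{b}_q)^2]=\|\bm{a}_p\|_2^2/d_\ell=1/d_\ell$, which gives the same $r_\ell^2/d_\ell$. At that point the two proofs coincide, differing only in whether the sum is grouped by columns of $\bm{A}_i^{(\ell)}$ or by pairs $(p,q)$.
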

\begin{proof}
    See Appendix \ref{Lemm1}.
\end{proof}
Lemma~1 shows that randomly sampled $\bm{A}_i^{(\ell)}$ and $\bm{A}_j^{(\ell)}$ satisfy orthogonal bases that provide low-overlap device-specific coordinates rather than shared semantic directions.
This property is what enables contribution separation.
For each device $u$, we can define its projection operator and its orthogonal complement as
\begin{equation}
\bm{P}_u^{(\ell)}
=
\bm{A}_u^{(\ell)}
\left(\bm{A}_u^{(\ell)}\right)^{\top},
\quad
\bm{P}_{u,\perp}^{(\ell)}
=
\bm{I}-\bm{P}_u^{(\ell)}.
\end{equation}
When device $u$ leaves, each remaining device $i$ projects its LoRA adapter to remove the component aligned with $\bm{A}_u^{(\ell)}$ as
\begin{equation}\label{eq:eliment2}
\Delta \bm{W}_i^{(\ell)} = \Delta \bm{W}_i^{(\ell)} \, \bm{P}_{u,\perp}^{(\ell)}.
\end{equation}
Since the basis $\bm{A}_i^{(\ell)}$ is frozen, this is equivalent to re-initiation the expansion matrix as $\bm{B}_{i,0}^{(\ell)} = \bm{B}_{i,t}^{(\ell)} - \bm{B}_{i,t}^{(\ell)}(\bm{A}_i^{(\ell)\top}\bm{A}_u^{(\ell)})\bm{A}_u^{(\ell)\top}\bm{A}_i^{(\ell)}$, where the correction term is small because $\bm{A}_i^{(\ell)\top}\bm{A}_u^{(\ell)} \approx \bm{0}$.
Through (\ref{eq:eliment2}) device $i$ can eliminate device $u$'s contribution through that projection wheres device $u$'s knowledge learned during training is encoded in the subspace spanned by its frozen basis $\bm{A}_u^{(\ell)}$.
By projecting out this subspace from the shared adapter, the information contributed by $u$ is eliminated while other devices' knowledge, residing in approximately orthogonal subspaces, is preserved.
Thus, the proposed unlearning method can enable a stable subspace deletion without retaining.
Further, this operation is local, incurs no communication cost, and requires no historical gradients.

Similarly, when a new device $j$ joins, it generates its own orthogonal projection matrix $\bm{A}_j^{(\ell)}$ and initializes $\bm{B}_j^{(\ell)}$ from its local data.
Because $\bm{A}_j^{(\ell)}$ is approximately orthogonal to the existing bases (Lemma~1), the new adapter $\bm{B}_j^{(\ell)}(\bm{A}_j^{(\ell)})^\top$ extends the active subspace without interfering with previously learned knowledge.
The surviving devices' expansion matrixs $\bm{B}_{i,t}^{(\ell)}$ remain unchanged during this step.
Thus, both events are handled by a single mechanism as follows
\begin{itemize}
    \item \textbf{Device leave:} We can remove a device's subspace component via orthogonal projection.
    \item \textbf{Device join:} We can add a new orthogonal subspace component via direct initialization.
\end{itemize}

These benefits are both enabled by the frozen orthogonal basis structure.
To further quantify its impact, we have the following lemma, which bounds the residual transfer between different devices' subspaces and controls both how much of a leaving device's contribution leaks into other subspaces and how much a joining device's basis interferes with existing ones.
\begin{lemma}[Projection transfer residual]
Under Assumption 1, for any $i\neq j$ and layer $\ell$, the expansion matrix $\bm{B}_{i,t}^{(\ell)}$ satisfies
\begin{equation}
\left\|
\bm{B}_{i,t}^{(\ell)}
\left(\bm{A}_i^{(\ell)}\right)^{\top}
\bm{P}_j^{(\ell)}
\right\|_F^2
\leq
\|\bm{B}_{i,t}^{(\ell)}\|_F^2
\left(\left\|
\left(\bm{A}_i^{(\ell)}\right)^{\top}
\bm{A}_j^{(\ell)}
\right\|_2\right)^2.
\end{equation}
\end{lemma}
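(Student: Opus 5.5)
The plan is to rewrite the projected quantity as a product of matrices and apply two standard norm inequalities: unitary invariance of the Frobenius norm, and the submultiplicative bound $\|\bm{X}\bm{M}\|_F\le\|\bm{X}\|_F\|\bm{M}\|_2$. Substituting $\bm{P}_j^{(\ell)}=\bm{A}_j^{(\ell)}(\bm{A}_j^{(\ell)})^\top$ gives
\begin{equation*}
\bm{B}_{i,t}^{(\ell)}\bigl(\bm{A}_i^{(\ell)}\bigr)^{\top}\bm{P}_j^{(\ell)}
=\bm{X}\bigl(\bm{A}_j^{(\ell)}\bigr)^{\top},
\qquad
\bm{X}:=\bm{B}_{i,t}^{(\ell)}\bm{M},
\quad
\bm{M}:=\bigl(\bm{A}_i^{(\ell)}\bigr)^{\top}\bm{A}_j^{(\ell)}\in\mathbb{R}^{r_\ell\times r_\ell}.
\end{equation*}
The proof then splits into two steps: first remove the trailing factor $(\bm{A}_j^{(\ell)})^\top$, then bound $\|\bm{X}\|_F$.

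\emph{Step 1 (removing the trailing factor).} I would use the fact that $\bm{A}_j^{(\ell)}$ is an orthogonal basis, i.e.\ $(\bm{A}_j^{(\ell)})^\top\bm{A}_j^{(\ell)}=\bm{I}_{r_\ell}$. This is what makes $\bm{P}_j^{(\ell)}$ a genuine orthogonal projector. With it,
\begin{equation*}
\bigl\|\bm{X}(\bm{A}_j^{(\ell)})^\top\bigr\|_F^2
=\operatorname{tr}\bigl(\bm{X}(\bm{A}_j^{(\ell)})^\top\bm{A}_j^{(\ell)}\bm{X}^\top\bigr)
=\|\bm{X}\|_F^2 .
\end{equation*}
\emph{Step 2 (bounding $\|\bm{X}\|_F$).} Write $\|\bm{X}\|_F^2=\sum_k\|\bm{M}^\top\bm{b}_k\|_2^2$, where $\bm{b}_k^\top$ is the $k$-th row of $\bm{B}_{i,t}^{(\ell)}$. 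Each term is at most $\|\bm{M}\|_2^2\|\bm{b}_k\|_2^2$, and summing over rows gives $\|\bm{X}\|_F^2\le\|\bm{B}_{i,t}^{(\ell)}\|_F^2\,\|\bm{M}\|_2^2$. This is exactly the claimed inequality. Note that the lemma is deterministic: it holds for every realization of the bases, so no expectation or probabilistic argument is needed. Lemma~1 is only invoked afterwards, to say that the right-hand side is small.

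\emph{Main obstacle.} The delicate point is the normalization in Step 1. If $\bm{A}_j^{(\ell)}$ were the raw unit-variance Gaussian matrix, Step 1 would only give $\|\bm{X}(\bm{A}_j^{(\ell)})^\top\|_F\le\|\bm{X}\|_F\|\bm{A}_j^{(\ell)}\|_2$. The extra factor $\|\bm{A}_j^{(\ell)}\|_2^2\approx(\sqrt{d_\ell}+\sqrt{r_\ell})^2$ would then appear, and the stated bound would be false. I would therefore state explicitly that the frozen bases of Assumption~1 are orthonormalized after sampling (e.g.\ by Gaussian sampling followed by QR), consistent with their description as ``orthogonal bases'' and with $\bm{P}_u^{(\ell)}$ being a projector. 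Alternatively, one can take entries of variance $1/d_\ell$ and accept an approximate identity $(\bm{A}_j^{(\ell)})^\top\bm{A}_j^{(\ell)}\approx\bm{I}$ with $1+o(1)$ slack. With exact orthonormality the proof is the two-line computation above. Without it, the statement should carry an extra factor of $\|\bm{A}_j^{(\ell)}\|_2^2$.
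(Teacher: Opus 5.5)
Your proof is correct and follows the paper's argument. The paper also substitutes $\bm{P}_j^{(\ell)}=\bm{A}_j^{(\ell)}(\bm{A}_j^{(\ell)})^{\top}$, uses orthonormality of the columns of $\bm{A}_j^{(\ell)}$ to get $\|(\bm{A}_j^{(\ell)})^{\top}\|_2=1$, and applies Frobenius/spectral submultiplicativity, so your exact-isometry step and row-wise bound are an expanded version of the same two-line computation. Your caveat about normalization is well taken: the paper's proof, like its proof of Lemma~1 (which refers to orthogonalized Gaussian matrices), silently requires orthonormal columns, and this conflicts with the raw unit-variance Gaussian sampling stated in the system model.
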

\begin{proof}
    See Appendix \ref{Lemm2}.
\end{proof}
Since each $\bm{A}_i^{(\ell)}$ is sampled independently, the overlap between different devices' projection matrices is provably limited.
When $j$ is a leaving device, this bound controls how much of device $i$'s adapter leaks into the leaving subspace after projection deletion.
When $i$ is a joining device, the bound controls the interference between the new basis and existing ones.
Together with Lemma~1 and Lemma~2, we can see that the orthogonal basis mechanism produces a well-defined post-event initialization based on frozen $\bm{A}_i^{(\ell)}$.
Prior work on projection-based unlearning~\cite{hoang2024learn,pan2025federated}, orthogonal subspace adaptation~\cite{wang2023orthogonal}, and project-then-recover pipelines~\cite{deng2024enable,li2026fedcare} supports the view that such an initialization falls within a recoverable neighborhood of the event objective.
\textcolor{black}{
Then, we can see that the orthogonal basis mechanism produces a well-defined post-event initialization without stored history.
Projection deletion bounds the leaving device's residual transfer, while orthogonal basis extension limits the joining device's interference with existing coordinates.
Both membership event types reduce to the same subsequent step: finite-round DGD correction starting from a recoverable neighborhood.
However, the per-round communication budget and finite correction horizon $K$ require allocating correction resources across layer groups according to each group's sensitivity.}
To further analyze the convergence of DGD correction from this initialization, we adopt standard regularity assumptions on the objective as in ~\cite{karimi2016linear,kuruzov2022gradient,yuan2016convergence}.
\begin{assumption}[Local PL-regularity after projection initialization]
For each layer $\ell$, $\mathcal{L}_{\mathrm{evt}}^{(e_t)}$ as a function of $\bm{B}^{(\ell)}$ (with other layers fixed) satisfies a local $\mu_\ell$-PL condition in a neighborhood that contains both the post-event initialization produced by the projection steps above and the centralized optimum model.
Each local correction objective $\mathcal{L}_{i,t}^{\mathrm{tr}}(\bm{B}_{i,t}^{(\ell)};\mathcal{D}_{i,t})$ is $L_\ell$-smooth, and $\mathcal{L}_{\mathrm{evt}}^{(e_t)}$ as a function of $\bm{B}^{(\ell)}$ (with other layers fixed) is also $L_\ell$-smooth~\cite{karimi2016linear,yuan2016convergence}.
The step size satisfies $0<\eta_c\leq 1/L_\ell$.
The local objective mismatch near the optimum is finite and can be given by
\begin{equation}
\tilde{\zeta}_\ell^2(e_t)
=
\frac{1}{|\mathcal{M}_{t+1}|}
\sum_{i\in\mathcal{M}_{t+1}}
\left\|
\nabla\mathcal{L}_{i,t}^{\mathrm{tr}}(\bm{B}_{i,t}^{(\ell)};\mathcal{D}_{i,t})
(\bm{B}^{(g),\star}(e_t);e_t)
\right\|_2^2.
\end{equation}
\end{assumption}
Under this assumption, the gap between the initialization after membership event and the event oracle in objective value is bounded by the parameter residual as
\begin{equation}
\mathcal{L}_{\mathrm{evt}}^{(e_t)}(\widetilde{\bm{B}}_0) - \mathcal{L}_{\mathrm{evt}}^{(e_t)}(\bm{B}_{e_t}^{\star})
\leq
\frac{L_\ell}{2}
\left\|
\widetilde{\bm{B}}_0^{(\ell)}(e_t) - \bm{B}_{e_t}^{\star,(g)}
\right\|_2^2 .
\end{equation}
After leave-side projection deletion, the pre-event consensus expansion factor $\bm{B}_t^{\mathrm{old},(\ell)}$ is updated to
\begin{equation}
\widetilde{\bm{B}}_{\mathrm{del}}^{(\ell)}(e_t)
=
\bm{B}_t^{\mathrm{old},(\ell)} - \bm{B}_t^{\mathrm{old},(\ell)}\bigl(\bm{A}^{(\ell)\top}\bm{A}_u^{(\ell)}\bigr)\bigl(\bm{A}_u^{(\ell)\top}\bm{A}^{(\ell)}\bigr),
\end{equation}
where the correction term is bounded by the basis leakage $\chi_{iu}^{(\ell)}$ from Lemma~2.

When a new device $j\in\mathcal{J}_t$ joins, it trains an initial expansion factor $\bm{B}_{j,0}^{(\ell)}$ from its local data.
The post-event expansion factor $\widetilde{\bm{B}}_{0}^{(\ell)}(e_t)$ is then the horizontal concatenation of the projected consensus and all joining devices' factors:
\begin{equation}
\widetilde{\bm{B}}_{0}^{(\ell)}(e_t) = \bigl[\,\widetilde{\bm{B}}_{\mathrm{del}}^{(\ell)}(e_t),\;\; \{\bm{B}_{j,0}^{(\ell)}\}_{j\in\mathcal{J}_t}\,\bigr],
\end{equation}
with the aggregate basis $\bm{A}_{\text{post}}^{(\ell)} = [\bm{A}_i^{(\ell)}]_{i\in\mathcal{M}_{t+1}}$.
Together, the two membership event types reduce to a well-defined finite-round correction problem.

The parameter residual from the post-event initialization to the event oracle is $\bm{r}_0^{(\ell)}(e_t) = \widetilde{\bm{B}}_0^{(\ell)}(e_t) - \bm{B}_{e_t}^{\star,(g)}$.
Under the $L_\ell$-smoothness condition in Assumption~2, the initialization gap admits a direct bound.
\begin{theorem}[Dynamic membership initialization gap]
Under Assumptions 1 and 2, after event $e_t$, the per-layer event objective gap before DGD correction satisfies
\begin{equation}
\mathcal{E}_{\mathrm{evt}}^{(\ell)}(e_t,0)
\leq
\frac{L_\ell}{2}\bigl\|\bm{r}_0^{(\ell)}(e_t)\bigr\|_2^2.
\end{equation}
\end{theorem}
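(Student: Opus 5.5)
The plan is the standard descent-lemma argument. Fix a layer $\ell$ and regard $\mathcal{L}_{\mathrm{evt}}^{(e_t)}$ as a function of $\bm{B}^{(\ell)}$ alone, with the other layers held at their post-event values. Define the per-layer gap $\mathcal{E}_{\mathrm{evt}}^{(\ell)}(e_t,0)$ as the objective at $\widetilde{\bm{B}}_0^{(\ell)}(e_t)$ minus the objective at the layer-wise minimizer $\bm{B}_{e_t}^{\star,(g)}$. Assumption~2 gives $L_\ell$-smoothness of this restricted function on a neighborhood that contains both the post-event initialization and the optimum. I will use the quadratic upper bound
\begin{equation*}
f(\bm{Y})\le f(\bm{X})+\langle\nabla f(\bm{X}),\bm{Y}-\bm{X}\rangle+\tfrac{L_\ell}{2}\|\bm{Y}-\bm{X}\|^2 ,
\end{equation*}
taking $\bm{X}=\bm{B}_{e_t}^{\star,(g)}$ and $\bm{Y}=\widetilde{\bm{B}}_0^{(\ell)}(e_t)$, so that $\bm{Y}-\bm{X}=\bm{r}_0^{(\ell)}(e_t)$.

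The key steps, in order, are as follows.
\begin{enumerate}
\item Check that the segment between $\widetilde{\bm{B}}_0^{(\ell)}$ and $\bm{B}_{e_t}^{\star,(g)}$ lies in the region where smoothness holds. I will take this from Assumption~2, interpreting its neighborhood as convex, for example a ball. Lemmas~1 and~2 support this by showing that the projection deletion and the join concatenation perturb the pre-event consensus only by leakage terms of order $\|\bm{A}_i^{(\ell)\top}\bm{A}_u^{(\ell)}\|_2$. This places $\widetilde{\bm{B}}_0^{(\ell)}$ in the recoverable neighborhood.
\item Use first-order optimality. Since $\bm{B}_{e_t}^{\star,(g)}$ is an interior minimizer of the restricted objective, $\nabla f(\bm{B}_{e_t}^{\star,(g)})=\bm{0}$, so the linear term vanishes.
\item Substitute into the descent lemma to obtain $f(\widetilde{\bm{B}}_0^{(\ell)})-f(\bm{B}_{e_t}^{\star,(g)})\le \tfrac{L_\ell}{2}\|\bm{r}_0^{(\ell)}(e_t)\|^2$. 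This is exactly the claimed bound on $\mathcal{E}_{\mathrm{evt}}^{(\ell)}(e_t,0)$.
\end{enumerate}

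The descent lemma naturally produces the Frobenius norm of the matrix residual. I will identify the statement's $\|\cdot\|_2$ with the Euclidean norm of $\mathrm{vec}(\bm{r}_0^{(\ell)})$, which equals the Frobenius norm. With that reading the bound is exactly the one stated, and the same identification is implicit in the unnumbered display that precedes the theorem.

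I expect the main obstacle to be the dimensional and oracle bookkeeping. After a join event, $\widetilde{\bm{B}}_0^{(\ell)}$ is a horizontal concatenation with the aggregate basis $\bm{A}_{\text{post}}^{(\ell)}$. The oracle must therefore be read in the same coordinates, with the same column count and the same basis, for $\bm{r}_0^{(\ell)}$ to be well defined. I will state this identification explicitly. I will also stress that the oracle is the minimizer of the layer-restricted objective, so that the zero-gradient step is valid. If only a global minimizer is available, the per-layer restriction still has zero partial gradient at it, which suffices.
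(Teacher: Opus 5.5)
Your proposal is correct and is essentially the paper's own proof. The paper likewise just uses $\nabla\mathcal{L}_{\mathrm{evt}}^{(e_t)}(\bm{B}_{e_t}^{\star})=\bm{0}$ together with the $L_\ell$-smoothness quadratic upper bound expanded at the oracle; your extra bookkeeping (the segment lying in the smoothness region, reading $\|\cdot\|_2$ as the Frobenius norm, matching the oracle's coordinates to the concatenated post-join factor) only makes explicit what the paper leaves implicit.

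One small caution concerns your fallback remark. If the oracle is the global minimizer, its partial gradient in $\bm{B}^{(\ell)}$ vanishes only for the restriction taken through the oracle, with the other layers frozen at the oracle's blocks. It need not vanish when the other layers are held at their post-event values, as in your setup, so you should keep the choice of frozen layers consistent with the choice of oracle.
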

\begin{proof}
    Since $\bm{B}_{e_t}^{\star}$ is a local optimum, $\nabla\mathcal{L}_{\mathrm{evt}}^{(e_t)}(\bm{B}_{e_t}^{\star})=\bm{0}$. By $L_\ell$-smoothness (Assumption~2),
    $\mathcal{L}(\widetilde{\bm{B}}_0) - \mathcal{L}(\bm{B}^\star) \leq \frac{L_\ell}{2}\|\widetilde{\bm{B}}_0 - \bm{B}^\star\|_2^2 = \frac{L_\ell}{2}\|\bm{r}_0\|_2^2$.
\end{proof}

Theorem~1 shows that the orthogonal basis mechanism converts both leave and join events into a bounded initialization gap proportional to $\|\bm{r}_0\|^2$.
For leave events, projection deletion provides a controlled starting point that approximately removes the departing device's contribution without requiring historical gradients.
For join events, the orthogonal structure ensures that a new device's basis does not significantly interfere with existing coordinates, so the join-side initialization gap remains bounded by the pairwise basis leakage quantified in Lemma~1.
Together, the two membership event types reduce to a well-defined finite-round correction problem.
\begin{figure}[t]
\centering
\includegraphics[width=7.5cm]{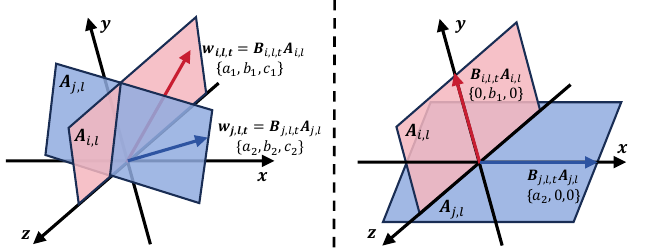}
\caption{Comparison of update conflicts in conventional LoRA and the proposed orthogonal LoRA method.}
\centering
\end{figure}

\subsection{Correction Gap Analysis under DGD}
Having obtained a bounded initialization, we now analyze how the correction gap evolves over rounds of DGD.
Starting from the initialization process, each device runs local correction steps and aggregation steps for $K$ rounds.
To model the parameter convergence, we model the following standard result, which bounds how much one aggregation step reduces disagreement across devices~\cite{lian2017can,yuan2016convergence,yang2024gnn}.
\begin{lemma}[Consensus contraction]\label{lem:consensus_contraction}
Let $\bm{W}\in\mathbb{R}^{M\times M}$ be doubly stochastic as defined in (\ref{eq:aggregation}) and $\bm{J}=\bm{1}\bm{1}^{\top}/M$.
For any $\bm{x}\in\mathbb{R}^{M\times d}$,
\begin{equation}
\|\bm{W}\bm{x} - \bm{J}\bm{x}\|_F
\leq
\|\bm{W} - \bm{J}\|_2 \cdot \|\bm{x} - \bm{J}\bm{x}\|_F.
\end{equation}
Moreover, for the Metropolis rule on a connected graph, $\|\bm{W} - \bm{J}\|_2 < 1$.
\end{lemma}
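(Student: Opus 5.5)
The first inequality follows from the fact that $\bm{J}$ projects onto the consensus subspace and commutes with $\bm{W}$. The second, strict contraction, follows from Perron--Frobenius applied to the Metropolis matrix. Throughout, $\bm{W}$ is symmetric, nonnegative, and doubly stochastic, so $\bm{W}\bm{1}=\bm{1}$ and $\bm{1}^\top\bm{W}=\bm{1}^\top$. The damped version $(1-\gamma_\ell)\bm{I}+\gamma_\ell\bm{W}$ in (\ref{eq:aggregation}) inherits these properties, being a convex combination of doubly stochastic matrices.

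\textbf{Step 1 (algebraic identities).} Since $\bm{W}\bm{1}=\bm{1}$ and $\bm{1}^\top\bm{W}=\bm{1}^\top$, we have $\bm{W}\bm{J}=\bm{J}\bm{W}=\bm{J}$ and $\bm{J}^2=\bm{J}$. It follows that
\begin{equation*}
(\bm{W}-\bm{J})(\bm{I}-\bm{J})=\bm{W}-\bm{J}-\bm{J}+\bm{J}=\bm{W}-\bm{J}.
\end{equation*}
Hence $\bm{W}\bm{x}-\bm{J}\bm{x}=(\bm{W}-\bm{J})(\bm{x}-\bm{J}\bm{x})$.

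\textbf{Step 2 (norm bound).} Apply the standard inequality $\|\bm{A}\bm{Y}\|_F\le\|\bm{A}\|_2\|\bm{Y}\|_F$ column by column: $\|\bm{A}\bm{Y}\|_F^2=\sum_k\|\bm{A}\bm{y}_k\|_2^2\le\|\bm{A}\|_2^2\sum_k\|\bm{y}_k\|_2^2$. With $\bm{A}=\bm{W}-\bm{J}$ and $\bm{Y}=\bm{x}-\bm{J}\bm{x}$, this gives the first claim.

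\textbf{Step 3 (strict contraction, the main step).} Because $\bm{W}$ is symmetric, it has real eigenvalues $1=\lambda_1\ge\lambda_2\ge\dots\ge\lambda_M$ with eigenvector $\bm{1}$ for $\lambda_1$. The matrix $\bm{W}-\bm{J}$ has the same eigenvectors, with $\bm{1}$ mapped to $0$. Hence $\|\bm{W}-\bm{J}\|_2=\max(|\lambda_2|,|\lambda_M|)$, and it suffices to show $\lambda_2<1$ and $\lambda_M>-1$.

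For $\lambda_2<1$: if the graph is connected, $\bm{W}$ is irreducible, since $w_{ij}>0$ exactly on edges. Perron--Frobenius then says the eigenvalue $1$ is simple.

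For $\lambda_M>-1$: I need aperiodicity, i.e., a positive diagonal entry. By the Metropolis rule, $\sum_{m\ne i}w_{im}\le d_i/\max\{d_i,d_m\}$ summed over neighbors, which is at most $1$. Equality holds only if every neighbor $m$ has $d_m\le d_i$. If this held at every node, all degrees would be equal (by connectivity), so the graph would be regular and $w_{ii}=0$ everywhere; a bipartite regular graph would then give eigenvalue $-1$. This is the obstacle. I would resolve it either by using the damping in (\ref{eq:aggregation}) with $\gamma_\ell<1$, which gives $w_{ii}\ge1-\gamma_\ell>0$, or by adopting the common convention $1/(1+\max\{d_i,d_j\})$, which guarantees $w_{ii}>0$. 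With some $w_{ii}>0$, the matrix is primitive, so $-1$ is not an eigenvalue. Alternatively, Gershgorin discs centered at $w_{ii}\ge0$ with radius $1-w_{ii}$ exclude $-1$ whenever $w_{ii}>0$ for all $i$.

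Together these give $\|\bm{W}-\bm{J}\|_2<1$. I would state explicitly that the strictness relies on this self-loop condition, which the damped matrix used in the algorithm satisfies.
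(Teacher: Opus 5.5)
Your proof of the inequality matches the paper's. Both reduce to $\bm{W}\bm{x}-\bm{J}\bm{x}=(\bm{W}-\bm{J})(\bm{x}-\bm{J}\bm{x})$ using $(\bm{W}-\bm{J})\bm{J}=\bm{0}$, then apply $\|\bm{A}\bm{Y}\|_F\le\|\bm{A}\|_2\|\bm{Y}\|_F$. Your Step~1 is the cleaner version. The paper's intermediate equality $\|\bm{W}\bm{x}-\bm{J}\bm{W}\bm{x}\|_F=\|(\bm{W}-\bm{J})\bm{W}\bm{x}\|_F$ is not an identity: the left side equals $\|(\bm{I}-\bm{J})\bm{W}\bm{x}\|_F$. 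Only the paper's parenthetical remark actually carries its proof.

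For strictness, the paper just invokes Perron--Frobenius for \emph{primitive} stochastic matrices and never checks primitivity. You instead split $\|\bm{W}-\bm{J}\|_2=\max\{|\lambda_2|,|\lambda_M|\}$ and handle each end separately, and that is what exposes the problem. Your obstruction is genuine. With the paper's weights $1/\max\{d_i,d_j\}$ and $\gamma_\ell=1$, every connected regular bipartite graph has $w_{ii}=0$ for all $i$ and eigenvalue $-1$. Already for $M=2$ with one edge, $w_{12}=w_{21}=1$ and $w_{11}=w_{22}=0$, so $\|\bm{W}-\bm{J}\|_2=1$. The ``moreover'' clause is therefore false as stated, and your self-loop hypothesis is necessary, not a convenience.

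One correction to your repair: the damping must satisfy $0<\gamma_\ell<1$, not merely $\gamma_\ell<1$. The damped matrix has eigenvalues $1-\gamma_\ell+\gamma_\ell\lambda_k$, so $\lambda_2<1$ survives only if $\gamma_\ell>0$; equivalently, its irreducibility comes from the off-diagonal entries $\gamma_\ell w_{ij}$. At $\gamma_\ell=0$, which the paper permits, the matrix is $\bm{I}$ and $\|\bm{I}-\bm{J}\|_2=1$. With $\gamma_\ell\in(0,1)$ your argument gives the strict bound. For $\gamma_\ell=1$, strictness holds exactly when the graph is not regular bipartite. Non-regular graphs have some positive diagonal entry, which your argument covers. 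Regular non-bipartite graphs are primitive through an odd cycle, which your self-loop argument does not cover but is easy to add.
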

\begin{proof}
    See Appendix \ref{Lemm3}.
\end{proof}
Lemma~3 shows that $\|\bm{W} - \bm{J}\|_2$ is the contraction factor for device disagreement.
To capture the worst-case contraction across all $K$ correction rounds, we define the effective aggregation factor as
\begin{equation}
\bar{\rho}_\ell(e_t)
=
\max_{0\leq t<K}
\left\|
\bm{W}_t^{(\ell)}
-
\bm{J}
\right\|_2 .
\end{equation}
Specially, a smaller $\bar{\rho}_\ell(e_t)$ means faster consensus contraction for layer $\ell$, when $\bar{\rho}_\ell = 0$, all devices instantaneously convergence.
Under the $\mu_\ell$-PL condition, a gradient step with step size $\eta_c$ contracts the optimality gap by a factor of $(1-\eta_c\mu_\ell)$~\cite{karimi2016linear}.
After $n_\ell$ local steps between successive aggregation rounds, the cumulative contraction is defined as
\begin{equation}
q_{\mathrm{opt},\ell}
=
\left(1-\eta_c\mu_\ell\right)^{n_\ell}.
\end{equation}
The consensus term also depends on the amount of disagreement that already exists and the disagreement injected by local updates.
Define the loss-aware initial disagreement as
\begin{equation}
\mathcal{D}_0^{(\ell)}(e_t)
=
\frac{1}{|\mathcal{M}_{t+1}|}
\sum_{i\in\mathcal{M}_{t+1}}
\left\|
\bm{B}_{i,0}^{(\ell)}
-
\bar{\bm{B}}_{0}^{(\ell)}
\right\|_2^2,
\end{equation}
where $\bar{\bm{B}}_{0}^{(\ell)}$ is the device-averaged expansion matrix after initialization in layer $\ell$.
Before the $s$-th aggregation step, local correction can create new disagreement.
We measure this injection by
\begin{equation}
\mathcal{I}_s^{(\ell)}(e_t)
=
\!\!\frac{1}{|\mathcal{M}_{t+1}|}\!\!
\sum_{i\in\mathcal{M}_{t+1}}
\left\|
\nabla\mathcal{L}_{i,t}^{\mathrm{tr}}(\bm{B}_{i,t}^{(\ell)};\mathcal{D}_{i,t})
(\bm{B}_{i,t}^{(\ell)};e_t)
-
\bar{\bm{g}}_s^{(\ell)}
\right\|_2^2,
\end{equation}
where $\bar{\bm{g}}_s^{(\ell)} = \frac{1}{|\mathcal{M}_{t+1}|} \sum_{i\in\mathcal{M}_{t+1}} \nabla\mathcal{L}_{i,t}^{\mathrm{tr}}(\bm{B}_{i,t}^{(\ell)};\mathcal{D}_{i,t})$ is the average gradient.
These definitions separate three mechanisms that influence the correction gap.
$q_{\mathrm{opt},\ell}$ measures local optimization contraction where larger $q_{\mathrm{opt},\ell}^{2K}$ means the local residual decays faster with rounds.
$\bar{\rho}_\ell$ measures consensus contraction where smaller $\bar{\rho}_\ell$ means devices agree faster.
$\tilde{\zeta}_\ell^2$ and $\mathcal{I}_s^{(\ell)}$ measure heterogeneity and drift where larger values mean devices' local objectives point in different directions.
Then, we prove the learning-unlearning convergence bound of the gap in the following theorem, which combines these mechanisms.
\begin{theorem}[Correction gap decomposition]\label{Theorem2}
Under Assumptions 1--2 and Theorem~1, after $K$ DGD correction rounds, there exist constants $c_{\mathrm{con},\ell}>0$ and $c_{\mathrm{het},\ell}>0$ depending only on local regularity constants such that
\begin{align}
\mathcal{E}_{\mathrm{evt}}^{(\ell)}(e_t,K)
\leq&
\mathcal{R}_{\mathrm{loc}}^{(\ell)}(K,e_t)
+
\mathcal{R}_{\mathrm{con}}^{(\ell)}(K,e_t)
\nonumber\\
&+
\mathcal{R}_{\mathrm{het}}^{(\ell)}(K,e_t)
+
\mathcal{C}_{\mathrm{stat}}^{(\ell)}(e_t),
\end{align}
where
\begin{equation}
\mathcal{R}_{\mathrm{loc}}^{(\ell)}(K,e_t)
=
q_{\mathrm{opt},\ell}^{2K}
\mathcal{C}_{\mathrm{init}}^{(\ell)}(e_t),
\end{equation}
\begin{align}
\mathcal{R}_{\mathrm{con}}^{(\ell)}(K,e_t)
=&
c_{\mathrm{con},\ell}L_\ell
\bar{\rho}_\ell^{2K}(e_t)
\mathcal{D}_{0}^{(\ell)}(e_t)
\nonumber\\
&+
c_{\mathrm{con},\ell}L_\ell
\eta_c^2 n_\ell^2
\nonumber\\
&\times
\sum_{s=0}^{K-1}
\bar{\rho}_\ell^{2(K-1-s)}(e_t)
\mathcal{I}_{s}^{(\ell)}(e_t),
\end{align}
and
\begin{equation}
\mathcal{R}_{\mathrm{het}}^{(\ell)}(K,e_t)
=
L_\ell
\frac{
c_{\mathrm{het},\ell}\eta_c^2 n_\ell^2
\tilde{\zeta}_\ell^2(e_t)}
{1-q_{\mathrm{opt},\ell}^{2}}.
\end{equation}
Finally, $\mathcal{C}_{\mathrm{stat}}^{(\ell)}(e_t)$ is given by
\begin{equation}
\mathcal{C}_{\mathrm{stat}}^{(\ell)}(e_t)
=
\liminf_{K\to\infty}\;
\min_{\Pi_\ell,\;\mathcal{E}_t^{(\ell)}}
\;
\mathcal{E}_{\mathrm{evt}}^{(\ell)}(e_t, K).
\end{equation}
\end{theorem}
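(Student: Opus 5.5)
The plan is the standard perturbed-gradient-descent argument for decentralized methods: follow the virtual average iterate $\bar{\bm{B}}_s^{(\ell)}$ and treat the gap between each device's local iterate and that average as an error injected into an otherwise centralized PL descent. Fix a layer $\ell$ with the other layers frozen, and stack the device iterates into $\bm{X}_s\in\mathbb{R}^{|\mathcal{M}_{t+1}|\times d}$. Because the damped Metropolis matrix $\bm{W}_t^{(\ell)}(\gamma_\ell)$ in (\ref{eq:aggregation}) is doubly stochastic, averaging commutes with aggregation. The average therefore evolves as
\begin{equation*}
\bar{\bm{B}}_{s+1}=\bar{\bm{B}}_s-\eta_c\sum_{k=0}^{n_\ell-1}\bar{\bm{g}}_{s,k},
\end{equation*}
where $\bar{\bm{g}}_{s,k}$ is the average of the local gradients evaluated at the devices' own (non-consensual) points.

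\textbf{Step 1: one-round descent of the average.} Write
\begin{equation*}
\bar{\bm{g}}_{s,k}=\nabla\mathcal{L}_{\mathrm{evt}}(\bar{\bm{B}}_{s,k})+\bm{e}_{s,k}.
\end{equation*}
Apply $L_\ell$-smoothness (Assumption 2) with $\eta_c\le 1/L_\ell$, then the local $\mu_\ell$-PL inequality, and bound the cross term by Young's inequality. This yields a one-step recursion
\begin{equation*}
\Delta_{s,k+1}\le(1-\eta_c\mu_\ell)\Delta_{s,k}+c\,\eta_c\|\bm{e}_{s,k}\|^2,
\qquad \Delta:=\mathcal{L}_{\mathrm{evt}}-\mathcal{L}_{\mathrm{evt}}(\bm{B}^\star).
\end{equation*}
The error $\bm{e}_{s,k}$ is controlled through smoothness by two pieces: $L_\ell^2$ times the consensus distance $\frac{1}{M}\|\bm{X}_{s,k}-\bm{J}\bm{X}_{s,k}\|_F^2$, and a local-drift term. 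The drift term is handled by expanding each local gradient around $\bm{B}^\star$, which brings in $\tilde{\zeta}_\ell^2$ through the $n_\ell$ local steps and contributes a factor of order $\eta_c^2n_\ell^2\tilde{\zeta}_\ell^2$. Iterating over $n_\ell$ steps gives a per-round contraction factor $q_{\mathrm{opt},\ell}$. I will work with the squared version $q_{\mathrm{opt},\ell}^2$ so that the recursion matches the stated $q^{2K}$ form; this amounts to combining the objective and squared-residual recursions through the Theorem 1 inequality $\Delta\le\frac{L_\ell}{2}\|r\|^2$.

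\textbf{Step 2: the consensus recursion.} Subtract the average from the stacked update and apply Lemma \ref{lem:consensus_contraction}, using $\|\bm{W}-\bm{J}\|_2\le\bar{\rho}_\ell$ in every round. With Young's inequality this gives
\begin{equation*}
\Xi_{s+1}\le\bar{\rho}_\ell^2\,\Xi_s+c\,\eta_c^2n_\ell^2\,\mathcal{I}_s^{(\ell)},
\end{equation*}
where $\Xi_s$ is the disagreement before aggregation, the $n_\ell$ local steps inject at most $\eta_c n_\ell$ times the deviation of the gradients from their mean, and $\Xi_0=\mathcal{D}_0^{(\ell)}$. Unrolling the recursion gives exactly
\begin{equation*}
\bar{\rho}_\ell^{2K}\mathcal{D}_0^{(\ell)}+\eta_c^2n_\ell^2\sum_{s=0}^{K-1}\bar{\rho}_\ell^{2(K-1-s)}\mathcal{I}_s^{(\ell)}.
\end{equation*}
Converting the final disagreement into objective value at the averaged point via $L_\ell$-smoothness produces the factor $c_{\mathrm{con},\ell}L_\ell$ in $\mathcal{R}_{\mathrm{con}}^{(\ell)}$.

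\textbf{Step 3: unrolling the objective recursion over $K$ rounds.} The initial term becomes $q_{\mathrm{opt},\ell}^{2K}\mathcal{C}_{\mathrm{init}}^{(\ell)}$, where $\mathcal{C}_{\mathrm{init}}^{(\ell)}$ is bounded through Theorem 1. The heterogeneity injections, summed as a geometric series, are bounded by
\begin{equation*}
\frac{L_\ell\,c_{\mathrm{het},\ell}\,\eta_c^2n_\ell^2\,\tilde{\zeta}_\ell^2}{1-q_{\mathrm{opt},\ell}^2}.
\end{equation*}
The consensus injections form a convolution of $q_{\mathrm{opt}}$-powers with $\bar{\rho}$-powers. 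To reach the stated form I will bound them by the dominant last-round disagreement term and absorb the remaining factors into $c_{\mathrm{con},\ell}$.

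\textbf{Step 4: the floor term.} The additive term $\mathcal{C}_{\mathrm{stat}}^{(\ell)}$ accounts for whatever does not vanish: the PL neighborhood may not contain the exact minimizer of the full objective, and the unavailable forget term cannot be reduced by DGD (the Remark). Since $\mathcal{C}_{\mathrm{stat}}^{(\ell)}$ is defined as the best asymptotically achievable gap, adding it is harmless. The careful route is to run the analysis relative to the best reachable point in the local PL region and pay $\mathcal{C}_{\mathrm{stat}}$ for the difference.

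\textbf{Main obstacle.} The coupling between the two recursions is the hardest step. The consensus error feeds the optimization recursion with weights $q^{2(K-s)}$, and collapsing that convolution into the clean separable form $\bar{\rho}^{2K}\mathcal{D}_0+\sum_s\bar{\rho}^{2(K-1-s)}\mathcal{I}_s$ is not automatic. My first attempt is to bound the convolution using $\max(q,\bar{\rho})$ and absorb the resulting factor $1/(1-\min)$ into $c_{\mathrm{con},\ell}$. If that introduces dependence beyond the local regularity constants, I would fall back to stating the bound at the averaged iterate with the disagreement evaluated only at round $K$, which matches the stated expression directly.
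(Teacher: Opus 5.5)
You follow the paper's architecture. The paper couples an optimization recursion for the averaged iterate to a disagreement recursion from Lemma~3 and unrolls both. It sums the heterogeneity injections as a geometric series, absorbs the cross-convolution into $c_{\mathrm{con},\ell}$ via $\max\{q_{\mathrm{opt},\ell},\bar{\rho}_\ell\}$, and appends $\mathcal{C}_{\mathrm{stat}}^{(\ell)}$. However, the paper does not derive its two recursions. By appeal to standard DGD analysis, it asserts $\Delta_{k+1}\le q^2\Delta_k+c_1\eta_c^2n^2\tilde{\zeta}^2+c_2D_k$ for the parameter distance $\Delta_k=\|\bar{\bm{B}}_k^{(\ell)}-\bm{B}^{(\ell),\star}\|^2$, and $D_{k+1}\le\bar{\rho}^2D_k+c_3\eta_c^2n^2\mathcal{I}_k$.

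Where you try to derive these recursions, the argument does not reach the stated bound. The first break is the exponent. Your Step~1 is a descent-lemma-plus-PL recursion on the objective gap. It gives the factor $1-\eta_c\mu_\ell$ per local step that you wrote, hence $q_{\mathrm{opt},\ell}^{K}$ after $K$ rounds, not $q_{\mathrm{opt},\ell}^{2K}$. Even unperturbed it gives only $1-\eta_c\mu_\ell(2-L_\ell\eta_c)$, which exceeds $(1-\eta_c\mu_\ell)^2$ whenever $L_\ell>\mu_\ell$.

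Routing through Theorem~1 cannot repair this. The inequality $\mathcal{E}\le\frac{L_\ell}{2}\|\bm{r}\|^2$ and its PL converse (quadratic growth, $\|\bm{r}\|^2\le\frac{2}{\mu_\ell}\mathcal{E}$, and only relative to the solution set) merely trade gap for distance at a cost of $L_\ell/\mu_\ell$. They never double the exponent. The leftover factor $q_{\mathrm{opt},\ell}^{-K}$ cannot be absorbed, because $\mathcal{R}_{\mathrm{loc}}^{(\ell)}$ carries no free constant. Contracting the squared distance by $(1-\eta_c\mu_\ell)^2$ per step is what a $(1-\eta_c\mu_\ell)$-contractive gradient map gives. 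That needs something like local strong convexity rather than PL, and it is what the paper's postulated $q^2$ recursion silently assumes.

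The two remaining steps you lean on cannot be closed either.

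\emph{Convolution.} Since $\sum_{k=0}^{K-1}q^{2(K-1-k)}\bar{\rho}^{2k}=(q^{2K}-\bar{\rho}^{2K})/(q^{2}-\bar{\rho}^{2})$, the $\mathcal{D}_0^{(\ell)}$ contribution decays like $q_{\mathrm{opt},\ell}^{2K}$ when $q_{\mathrm{opt},\ell}>\bar{\rho}_\ell$. No $K$-independent constant then turns it into $\bar{\rho}_\ell^{2K}\mathcal{D}_0^{(\ell)}$; your $\max$ bound yields $\max\{q,\bar{\rho}\}^{2K}$, not $\bar{\rho}^{2K}$. When $\bar{\rho}_\ell>q_{\mathrm{opt},\ell}$, the constant is $1/(\bar{\rho}_\ell^2-q_{\mathrm{opt},\ell}^2)$, which depends on topology and schedule. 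Likewise, in your Step~2, Young's inequality either inflates $\bar{\rho}_\ell^2$ to $(1+\epsilon)\bar{\rho}_\ell^2$ or puts $1/(1-\bar{\rho}_\ell^2)$ into $c$. Your fallback does not help: the gap is already evaluated at $\bar{\bm{B}}_K$, and earlier disagreements enter through the gradient errors that move $\bar{\bm{B}}_k$, so they cannot be dropped.

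\emph{Bias and $\mathcal{C}_{\mathrm{stat}}$.} The error $\bm{e}_{s,k}$ is not only consensus error and drift. The averaged local gradient differs from $\nabla\mathcal{L}_{\mathrm{evt}}^{(e_t)}$ by a persistent bias $\bm{\beta}$, because the $\lambda_f$ forget term (and the $\lambda_j$ reweighting) is absent from every local objective. That bias leaves a floor of order $\|\bm{\beta}\|^2/\mu_\ell$, independent of $\eta_c n_\ell$. By contrast, $\mathcal{R}_{\mathrm{het}}^{(\ell)}=O(L_\ell\eta_c n_\ell\tilde{\zeta}_\ell^2/\mu_\ell)$ for small $\eta_c\mu_\ell n_\ell$. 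And $\mathcal{C}_{\mathrm{stat}}^{(\ell)}$ cannot pay for it: being a minimum over schedules, it is a lower bound on the analyzed run's limiting gap, not an upper bound.

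A scalar instance shows the inequality then fails. Take identical remaining devices initialized at $\widetilde{B}_0=0$ with $\mathcal{L}_{i,t}^{\mathrm{tr}}(B)=\frac12(B-b)^2$. Take the event objective $\frac12(B-b)^2+\frac12(B+b)^2$ (retain plus forget with $\lambda_f=1$), so $\mu_\ell=L_\ell=2$ and $B^\star=0$. The gap is $b^2\bigl(1-(1-\eta_c)^{n_\ell K}\bigr)^2\to b^2$. Meanwhile $\mathcal{R}_{\mathrm{loc}}^{(\ell)}=\mathcal{R}_{\mathrm{con}}^{(\ell)}=0$ and $\mathcal{R}_{\mathrm{het}}^{(\ell)}=O(\eta_c n_\ell b^2)$. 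Also $\mathcal{C}_{\mathrm{stat}}^{(\ell)}=0$, since a step size vanishing with $K$ keeps the iterate near $0$.

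The same instance violates the paper's postulated recursion. Reaching the statement therefore needs an explicit bias term, or an assumption that the local objectives average to $\mathcal{L}_{\mathrm{evt}}^{(e_t)}$, together with topology-dependent constants.
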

\begin{proof}
    See Appendix \ref{Theorem2}.
\end{proof}
From Theorem~2, we can see that the bound reveals how different factors govern the correction gap based on local residual $\mathcal{R}_{\mathrm{loc}}^{(\ell)}$, consensus residual $\mathcal{R}_{\mathrm{con}}^{(\ell)}$, heterogeneity residual $\mathcal{R}_{\mathrm{het}}^{(\ell)}$, and irreducible residual $\mathcal{C}_{\mathrm{stat}}^{(\ell)}(e_t)$.
The local residual $\mathcal{R}_{\mathrm{loc}}^{(\ell)}$ shrinks as $q_{\mathrm{opt},\ell}^{2K}$ where more local correction steps reduce this term.
The consensus residual $\mathcal{R}_{\mathrm{con}}^{(\ell)}$ shrinks as $\bar{\rho}_\ell^{2K}$ where stronger aggregation (smaller $\bar{\rho}_\ell$) reduces this term.
The heterogeneity residual $\mathcal{R}_{\mathrm{het}}^{(\ell)}$ does not decay with $K$, forming a floor determined by the mismatch among devices' local objectives.
These three residuals are not independent.
In particular, increasing the number of local steps $n_\ell$ reduces $\mathcal{R}_{\mathrm{loc}}^{(\ell)}$ (through $q_{\mathrm{opt},\ell}^{2K}$), but simultaneously increases $\mathcal{R}_{\mathrm{con}}^{(\ell)}$ (through the $\eta_c^2 n_\ell^2$ term in the injected disagreement), because more local steps allow devices to drift further apart before the next aggregation step.
Similarly, stronger aggregation (smaller $\bar{\rho}_\ell$) reduces $\mathcal{R}_{\mathrm{con}}^{(\ell)}$ but consumes communication budget that could otherwise be used for more frequent synchronization.
Finally, the irreducible residual term $\mathcal{C}_{\mathrm{stat}}^{(\ell)}(e_t)$ represents the minimum event correction gap achievable under the optimal policy and topology as the number of correction rounds tends to infinity.
It collects errors that cannot be removed by changing local steps or communication edges as finite-sample noise, model approximation error, and the mismatch between the best attainable adapter under the given data distribution and the event oracle itself.
Theorem~2 applies to any membership event $e_t = (\mathcal{J}_t, \mathcal{U}_t)$.
The two specializations recover standard guarantees for the pure departure and pure join regimes.

To further simplify the modeling, we configure a separate communication topology and correction schedule for each of the $L$ LoRA layers. We partition the $L$ layers into $G$ coarser layer groups based on architectural proximity (e.g., early, middle, and late blocks).
Each group $g$ shares a single edge set $\mathcal{E}_t^{(g)}$, a single aggregation matrix $\bm{W}_t^{(g)}$, and a single correction schedule $\Pi_\ell$.

\subsection{Correction Policy Design}
\textcolor{black}{%
Theorem~2 and Assumption~2 together identify two quantities that govern the correction difficulty of each layer group.
The first is the local Lipschitz constant $L_g$: under Assumption~2, the admissible step size is constrained by $\eta \leq 1/L_g$, so a group with larger $L_g$ advances more slowly per local step.
The second is the initial parameter residual $\|\bm{r}_0^{(g)}\|^2$ from Theorem~1: the PL condition in Assumption~2 relates this to the post-projection gradient energy through $\|\nabla\mathcal{L}(\widetilde{\bm{B}}_0^{(g)})\|_F^2 \geq 2\mu_g \cdot \mathcal{E}_{\mathrm{evt}}^{(g)}(e_t,0)$, so a larger gradient signals a group that starts further from the event oracle.
Consequently, a group with both high curvature and a large initial gap faces steeper correction difficulty across all residual terms $\mathcal{R}_{\mathrm{loc}}$, $\mathcal{R}_{\mathrm{con}}$, and $\mathcal{R}_{\mathrm{het}}$ in Theorem~2.
}

\textcolor{black}{%
Both quantities admit computable proxies from one forward and one backward pass after projection initialization.
To estimate it, we employ the empirical Fisher information matrix $\bm{F}_g^{(e_t)} \in \mathbb{R}^{r_g \times r_g}$ that approximates the Hessian via the Gauss--Newton correspondence~\cite{karimi2016linear}.
Its largest eigenvalue $\lambda_{\max}(\bm{F}_g^{(e_t)})$ preserves the rank order of $L_g$.
The post-projection gradient energy $\frac{1}{|\mathcal{M}_{t+1}|}\sum_{i\in\mathcal{M}_{t+1}} \|\nabla_{\bm{B}^{(g)}}\mathcal{L}_i(\widetilde{\bm{B}}_0)\|_F^2$ reflects the initialization gap and is available from one backward pass.
}
\textcolor{black}{%
Thus, their product provides a rank-order proxy for the correction potential, which is given by
\begin{equation}
\mathcal{S}_g
=
\lambda_{\max}\!\bigl(\bm{F}_g^{(e_t)}\bigr)
\cdot
\frac{1}{|\mathcal{M}_{t+1}|}\sum_{i\in\mathcal{M}_{t+1}}
\bigl\|\nabla_{\bm{B}^{(g)}}\mathcal{L}_i(\widetilde{\bm{B}}_0)\bigr\|_F^2,
\label{eq:Sg_computable}
\end{equation}
where a higher $\mathcal{S}_g$ signals that group $g$ has both steeper curvature and larger initial gap.
The shared allocation proportion follows as
\begin{equation}
p^{(g)} = \frac{\mathcal{S}_g}{\sum_{g'} \mathcal{S}_{g'} + \epsilon},
\label{eq:p_shared}
\end{equation}
with $\epsilon>0$ avoiding division by zero.
}

Given the shared proportion $p^{(g)}$, the correction schedule for layer group $g$ is $\Pi_g = (n_g,\eta_g,\lambda_g^{\mathrm{prox}},\gamma_g)$, where $n_g$ is the number of local correction steps per aggregation operation, $\eta_g$ is the step size, $\lambda_g^{\mathrm{prox}}$ is the proximal damping coefficient, and $\gamma_g$ is the aggregation strength.
Among these, $n_g$ and $\lambda_g^{\mathrm{prox}}$ consume only local computation and incur no communication cost.
In contrast, increasing $\gamma_g$ and adding communication edges consume the per-round budget $C_t$.

\textcolor{black}{%
A group with high $\mathcal{S}_g$ faces three simultaneous difficulties: its loss surface is steep (requiring more local steps $n_g$), its post-projection optimum differs substantially from other devices' (requiring stronger proximal anchoring $\lambda_g^{\mathrm{prox}}$), and correcting these deviations benefits from faster consensus (requiring denser mixing $\gamma_g$).
Scaling all three dimensions by the same rank-order proportion $p^{(g)}$ is the simplest allocation rule that respects this joint dependence while remaining computable from the two one-pass observables in~\eqref{eq:Sg_computable}.
}
For local correction we design $n_g = n_{\min} + (n_{\max} - n_{\min})\,p^{(g)}$.
For heterogeneity control we employ $\lambda_g^{\mathrm{prox}} = \lambda_{\max}\,p^{(g)}$.
Similarly, for consensus aggregation we employ $\gamma_g = \gamma_{\min} + (1 - \gamma_{\min})\,p^{(g)}$.
When $\mathcal{S}_g$ is small, $p^{(g)}$ is near zero and the parameters revert to their minimal-cost defaults.

The remaining budget, after accounting for local steps and proximal damping, is allocated to topology by sampling a connected random graph of target density $\gamma_g$.
The construction requires no per-edge marginal-gain estimation and is compatible with the finite-compute, rank-order regime.
Algorithm~1 summarizes the per-round correction policy.

{\color{black}
\begin{algorithm}[t]
\caption{Priority-Aware Per-Round Correction Policy}
\label{alg:correction_policy}
\begin{algorithmic}[1]
\STATE \textbf{Input}: Event $e_t$, active devices $\mathcal{M}_{t+1}$, communication budget $C_t$.
\STATE \textbf{Output}: Updated $\bm{B}_{i,t}^{(g)}$ for all $i\in\mathcal{M}_{t+1}$ and all $g$.
\STATE Compute post-event initialization $\widetilde{\bm{B}}_0^{(g)}$.

\FOR{each layer group $g$}
    \STATE Compute $\mathcal{S}_g$ from observables via Eq.~\eqref{eq:Sg_computable} (one forward + one backward pass).
\ENDFOR
\STATE Compute the shared proportion $p^{(g)} = \mathcal{S}_g/(\sum_{g'}\mathcal{S}_{g'}+\epsilon)$ for each $g$ via Eq.~\eqref{eq:p_shared}.
\STATE Set $n_g$, $\gamma_g$, and $\lambda_g^{\mathrm{prox}}$ via the allocation rules in Sec.~IV-C.
\FOR{each layer group $g$}
    \STATE Sample a connected random graph of density $\gamma_g$ and form the Metropolis-weighted mixing matrix $\bm{W}_t^{(g)}$.
\ENDFOR
\STATE Run $K$ rounds of DGD correction under $(\Pi_g,\bm{W}_t^{(g)})$.
\end{algorithmic}
\end{algorithm}
}
\section{Numerical Validation}

\subsection{Experimental Setup}\label{subsec:exp_setup}

We evaluate the proposed framework on a decentralized LoRA fine-tuning testbed that includes Qwen-7B \cite{yang2025qwen3} for QA tasks.
We also evaluate its performance on the ResNet-18 model with the CIFAR-100 dataset, where each device owns a subset of the training data.
The base model is frozen, and each device trains and exchanges only low-rank adapter matrices as modeled.
To induce non-trivial per-block heterogeneity, layers are partitioned into three groups $g\in\{\mathrm{early},\mathrm{mid},\mathrm{late}\}$ with LoRA ranks $r_g\in\{4,8,16\}$.
The system uses $M=6$ devices communicating over a sparse mesh, and the correction horizon is $K=60$ rounds after the membership event.

Table~\ref{Tab:qwen_policy} reports the per-block priority diagnosis on the constructed stress test.
Late-block $S_g$ is $227\%$ higher than early-block $S_g$.
The policy $\Pi_g$ that the proposed method derives from $S_g$ doubles local steps and adds a $10^{-3}$ proximal anchor specifically on the high-$S_g$ block.
The high-$S_g$ block dominates the correction residual, so allocating extra local steps and proximal damping there reduces the residual fastest per unit of compute, while leaving low-$S_g$ blocks at uniform settings keeps cost low.

\begin{table}[t]
\centering
\caption{Per-block priority signals and policy $\Pi_g$ assigned by Ours (full).}
\label{Tab:qwen_policy}
\resizebox{0.5\textwidth}{!}{
\begin{tabular}{lcccccccc}
\hline
Block & $r_g$ & $S_g$ & $\hat S_g$ & Class & $n_g$ & $\eta_g$ & $\lambda_g^{\mathrm{prox}}$ & $\gamma_g$\\
\hline
Early & 4 & 0.6346 & 0.497 & low & 1 & $5.0\!\times\!10^{-4}$ & $0$ & $0.40$\\
Mid & 8 & 1.1394 & 0.893 & mid & 1 & $5.0\!\times\!10^{-4}$ & $0$ & $0.40$\\
Late & 16 & 2.0552 & 1.610 & high & 2 & $5.0\!\times\!10^{-4}$ & $1.0\!\times\!10^{-3}$ & $0.40$\\
\hline
\end{tabular}
}
\end{table}


\subsection{Baseline Comparison}\label{subsec:baselines}

We compare the proposed method against five algorithms:
\begin{enumerate}
    \item FedEraser-D2D~\cite{federaser}: per-device approximation of the leaver's gradient using each device's own data on the leaver's basis $\bm{A}_u$.
    \item EWC-leave / EWC-Join~\cite{kirkpatrick2017ewc}: Fisher-weighted proximal anchor to the pre-event consensus $\bm{B}^*_{\mathrm{pre}}$ with post-hoc Fisher diagonal.
    \item Influence-Fn-D2D~\cite{koh2017influence}: block-diagonal empirical Gauss-Newton inverse (natural-gradient rescaling).
    \item KD-Unlearn~\cite{kelsch2025spare}: KL toward base-model logits, the only teacher available in D2D-LoRA.
    \item Naive Remove / Fine-tune: uniform DGD on remaining devices without projection (leave) / standard DGD with the new device (join).
\end{enumerate}

Fig.~\ref{fig_leave_baselines} shows the event gap $E_{FU}$ over correction rounds for the proposed method and five baselines under the leave-only membership event.
From this figure, we can see that the proposed method achieves a $6.3\%$ improvement compared to Naive Remove and a $22\%$ improvement compared to KD-Unlearn on the final gap.
This gain comes from the priority-aware policy $\Pi_g$ concentrating local-step budget and proximal damping on the high-$S_g$ block, attacking the dominant residual.
We can also see that the KD-Unlearn plateaus at the no-correction floor and the proposed method achieves a $4\%$ improvement compared to the four-baseline cluster on the final gap.
This failure mode arises because the only D2D-available teacher is the frozen base model with no task knowledge, so KL toward it pulls the adapter back toward $\bm{B}{=}0$ rather than removing the leaver's contribution.

\begin{figure*}[t]\centering
\includegraphics[width=0.75\textwidth]{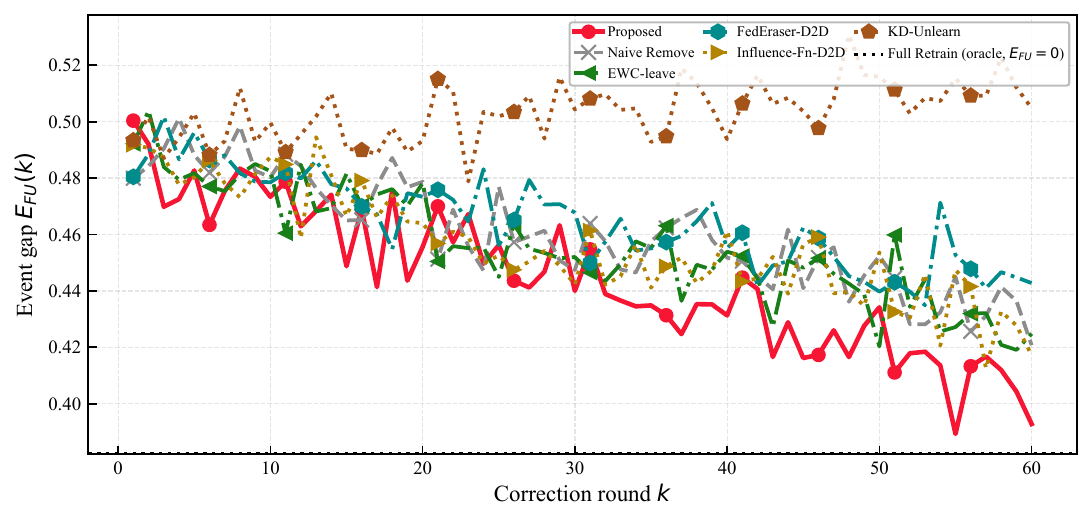}
\caption{Event gap $E_{FU}$ vs round $k$ for the proposed method and five baselines (leave-only).}\label{fig_leave_baselines}
\end{figure*}

In Fig. \ref{fig_RI}, we report the per-round relative improvement $\mathrm{RI}(k) = 100 \times (L_{\mathrm{uniform}}(k) - L_{\mathrm{method}}(k)) / L_{\mathrm{uniform}}(k)$, where $L_{\mathrm{uniform}}(k)$ and $L_{\mathrm{method}}(k)$ are the consensus retain CE losses of the Uniform topology and resource allocation anchor and the evaluated method at step $k$, respectively.
In particular, a positive RI indicates the method outperforms Uniform.
Per-block density follows the same $\mathcal{S}_g$-driven allocator, with $n_{\max}{=}n_{\min}{=}1$ and $\lambda_{\max}{=}0$ so the shared proportion $p^{(g)}$ enters only the mixing density $\gamma_g$.
From this figure, we can first see that the proposed method achieves a $0.53\%$ improvement in final-round RI compared to Uniform, while FedEraser-D2D achieves a $0.69\%$ improvement at the cost of $109$\,KB per-device gradient bookkeeping, so the proposed method matches FedEraser-D2D's correction quality with zero storage overhead.
This near-parity comes from the priority-aware $\gamma_g$ concentrating mixing on the high-$\mathcal{S}_g$ block, which absorbs the same per-event residual that FedEraser-D2D's stored per-device gradients target, but without storing the cumulative contribution of every device.
Second, we can also see that the three Fisher- and influence-based correctors (EWC-leave $-3.50\%$, Influence-Fn-D2D $-3.46\%$, and KD-Unlearn $-0.17\%$) all underperform Uniform, while Naive Remove drops to $-3.78\%$ and never recovers.
This collapse confirms that anchoring the post-event correction to a pre-event reference — whether through a Fisher-weighted proximal, a Gauss-Newton preconditioner, or a frozen-teacher KL — is a sharp floor on the dynamic timeline once the basis is projected, because the reference no longer exists in the parameter space the corrected $\bm{B}$ now uses, consistent with Fig.~\ref{fig_leave_baselines}.

\begin{figure}[t]\centering
\includegraphics[width=\columnwidth]{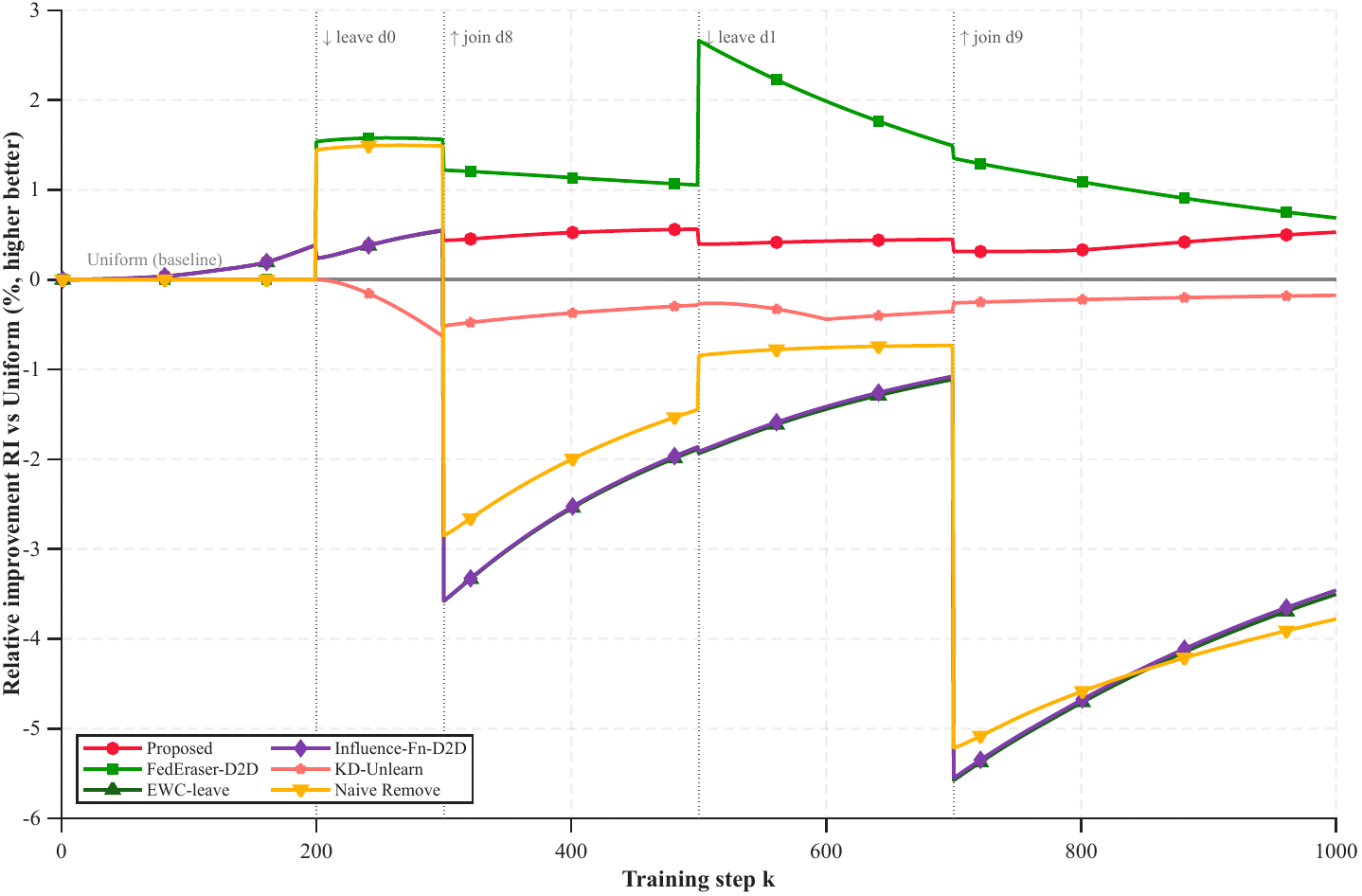}
\caption{Per-round relative improvement RI (\%) vs Uniform over $K{=}1000$ rounds on the CIFAR-100 + ResNet-18 testbed; vertical dashed lines mark the four membership events at $k{\in}\{200,300,500,700\}$.}\label{fig_cifar_baselines_ri}
\label{fig_RI}
\end{figure}

Fig.~\ref{fig_forget_retain} shows the per-round retain loss over correction rounds.
From this figure, we can first see that the proposed method achieves a $1\%$ improvement compared to the four scheduling-agnostic baselines (Naive Remove, EWC-leave, FedEraser-D2D, Influence-Fn-D2D) on the final retain loss.
This smaller retain damage comes from $\Pi_g$ concentrating correction effort on the high-$S_g$ block and sparing the low-$S_g$ blocks from over-update.
Second, the proposed method achieves a $4\%$ improvement over KD-Unlearn, which remains the highest-performing curve throughout.
The worst-case damage of KD-Unlearn stems from its KL target being the task-agnostic base model, which pulls the adapter toward $\bm{B}{=}0$ and erases the retained knowledge accumulated during pretraining.

\begin{figure}[t]\centering
\includegraphics[width=\columnwidth]{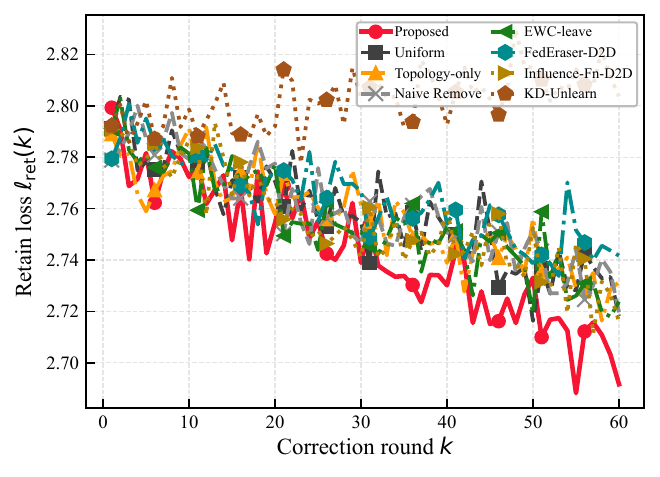}
\caption{Retain loss $\ell_{\mathrm{ret}}$ vs round $k$ (leave-only event).}\label{fig_forget_retain}
\end{figure}


\subsection{Ablation Study}\label{subsec:ablation}

To conduct the ablation study, we employ six variants of the proposed method, denoted by the prefix “Ours,” each removing one or more components of the priority-aware policy $\Pi_g{=}(n_g,\eta_g,\lambda_g^{\mathrm{prox}},\gamma_g)$ (with sync period $H_g$):
\begin{itemize}
    \item Ours (full): full priority-aware policy ($n_g$, $\lambda_g^{\mathrm{prox}}$, $H_g$, $\gamma_g$ all derived from $S_g$).
    \item Ours (ls+prox): priority-aware $n_g$ and $\lambda_g^{\mathrm{prox}}$; $H_g{=}1$ and $\gamma_g{=}0.40$ uniform.
    \item Ours (local-steps): priority-aware $n_g$ only.
    \item Ours (retain-prox): priority-aware $\lambda_g^{\mathrm{prox}}$ only.
    \item Ours (uniform): no priority-aware component (baseline ablation).
    \item Ours (topology): $S_g$ routed only to $\gamma_g$ (Section~IV-D shows this is the wrong knob).
\end{itemize}

Fig.~\ref{fig_qwen_convergence} shows $E_{FU}$ over correction rounds for the six ablation arms.
First, Ours (full) achieves a $6.3\%$ improvement compared to Ours (uniform) on the final gap.
This gain comes from stacking three priority-aware components (local-steps, retain-prox, priority synchronization) on the high-$S_g$ block.
Second, Ours (full) achieves a $9.2\%$ improvement compared to Ours (topology), the only arm that lags Ours (uniform).
This regression of Ours (topology) comes from routing $S_g$ to edge density alone, which adds communication without reducing the local-optimization residual.

\begin{figure}[t]\centering
\includegraphics[width=\columnwidth]{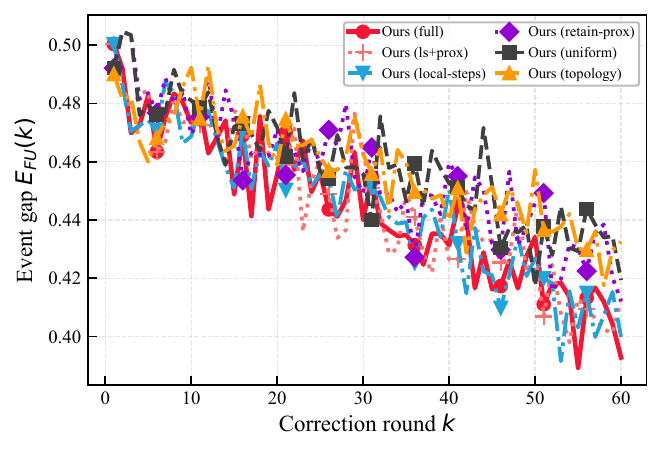}
\caption{Event gap $E_{FU}$ vs round $k$ for six ablation arms (leave-only).}\label{fig_qwen_convergence}
\end{figure}

Fig.~\ref{fig_qwen_cost} shows each ablation arm's trajectory in the ($E_{FU}$, normalized cost) plane.
First, Ours (full) ends Pareto-dominant, with the lowest gap and the lowest cost.
This comes from combining gap-reducing scheduling (local-steps + retain-prox) with cost-reducing priority synchronization, which no single ablation arm achieves alone.
Second, Ours (full) achieves a $9.2\%$ improvement compared to Ours (topology) on the final gap and a $27\%$ cost reduction compared to Ours (topology) at $K{=}60$.
This double-loss of Ours (topology) comes from the same root cause as Fig.~\ref{fig_qwen_convergence}: extra edges cost communication, but the priority signal alone cannot reduce the local-optimization residual.

\begin{figure}[t]\centering
\includegraphics[width=\columnwidth]{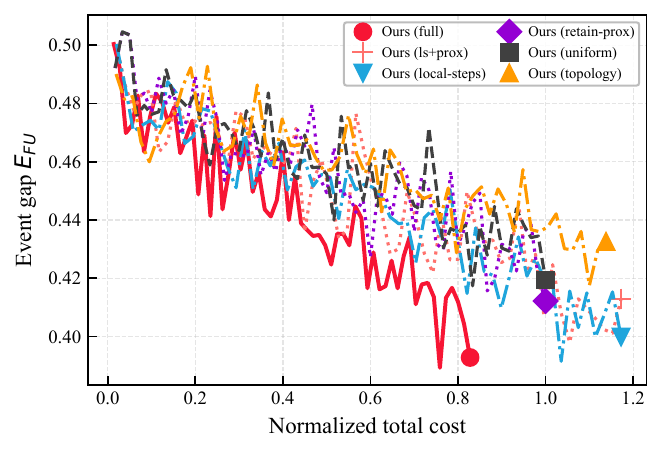}
\caption{Event gap $E_{FU}$ vs normalized cost for six ablation arms (leave-only).}\label{fig_qwen_cost}
\end{figure}

Fig.~\ref{fig_qwen_ri} shows the per-round RI of each ablation arm relative to Ours (uniform).
First, priority-aware arms (Ours (local-steps), Ours (retain-prox), Ours (ls+prox), Ours (full)) cross zero by round 15 and stay above thereafter.
This crossover occurs because the high-$S_g$ block requires 15 additional local steps before the proximal anchor takes hold; once it does, the gap separation is stable.
Second, Ours (uniform) achieves a $3\%$ improvement compared to Ours (topology) throughout the run, with Ours (topology) staying persistently below zero.
This persistent negative RI of Ours (topology) stems from extra edges that impede communication without any corresponding gap reduction, so the per-round RI never recovers.

\begin{figure}[t]\centering
\includegraphics[width=\columnwidth]{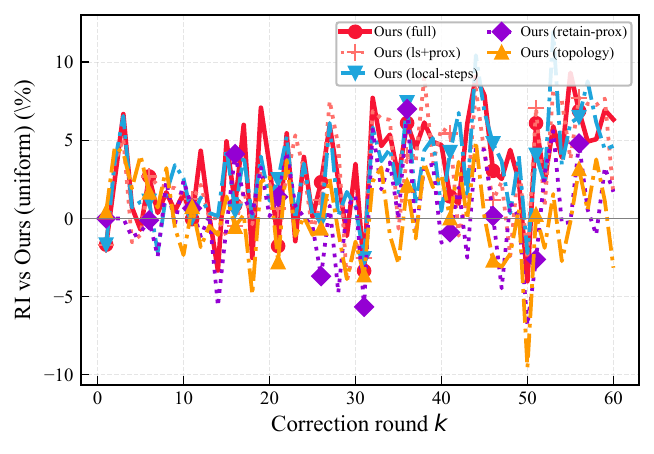}
\caption{RI vs round $k$ for five ablation arms (baseline Ours (uniform)).}\label{fig_qwen_ri}
\end{figure}

\begin{table*}[t]\centering
\caption{Final/best gap $E_{FU}$, RI, normalized cost, and stability for six ablation arms (leave-only, $K{=}60$).}\label{Tab:qwen_main}
\begin{tabular}{lcccccc}\hline
Method & Final & Best & RI vs Ours (uniform)\,(\%) & NormComm & NormTotal & Stab\\\hline
Ours (full)         & $0.39$ & $0.39$ & $+6.3$ & $0.58$ & $0.83$ & oscillatory\\
Ours (ls+prox)      & $0.41$ & $0.40$ & $+1.5$ & $1.00$ & $1.17$ & oscillatory\\
Ours (local-steps)  & $0.40$ & $0.39$ & $+4.7$ & $1.00$ & $1.17$ & oscillatory\\
Ours (retain-prox)  & $0.41$ & $0.41$ & $+1.7$ & $1.00$ & $1.00$ & oscillatory\\
Ours (uniform)      & $0.42$ & $0.42$ & $\phantom{+}0.0$ & $1.00$ & $1.00$ & monotone\\
Ours (topology)     & $0.43$ & $0.42$ & $-3.1$ & $1.17$ & $1.14$ & oscillatory\\
No-correction (ref)        & $0.50$ & $0.50$ & $-20.2$ & $0.00$ & $0.00$ & flat\\
\hline\end{tabular}\end{table*}


\section{Conclusion}
This paper studies a decentralized collaborative fine-tuning framework with device joins and leaves dynamically.
To enhance learning-unlearning performance, we proposed a frozen random orthogonal basis mechanism that provides a no-history contribution index, enabling projection deletion after leave events. 
By separating the knowledge subspace before membership event changes, our proposed method can achieve fast convergence through subspace initialization.
The theoretical analysis shows that finite-round correction is governed by local optimization, consensus aggregation, and heterogeneity drift residuals. 
Then, a priority-aware learning-unlearning correction resource allocation method is proposed that enables priority optimization according to different layer groups.
The resulting framework provides a principled way to support fast learning-unlearning correction in dynamic decentralized LoRA systems under per-round communication constraints.

\appendix

\subsection{Proof of Lemma 1}
\label{Lemm1}
Fix layer group $g$. Since $\bm{A}_i^{(g)}$ and $\bm{A}_j^{(g)}$ are obtained by orthogonalizing independent Gaussian matrices, their column spaces are independently and uniformly distributed on the Stiefel manifold. Let $\bm{a}_{i,p}^{(g)}$ be the $p$-th column of $\bm{A}_i^{(g)}$. Then
\begin{equation}
\left\|
\left(\bm{A}_i^{(g)}\right)^{\top}
\bm{A}_j^{(g)}
\right\|_F^2
=
\sum_{p=1}^{r_g}
\left\|
\left(\bm{A}_j^{(g)}\right)^{\top}
\bm{a}_{i,p}^{(g)}
\right\|_2^2 .
\end{equation}
Conditioned on $\bm{a}_{i,p}^{(g)}$, the expected squared projection of a unit vector onto an independent $r_g$-dimensional random subspace of $\mathbb{R}^{d_g}$ is $r_g/d_g$. Summing over $p=1,\ldots,r_g$ gives
\begin{equation}
\mathbb{E}
\left[
\left\|
\left(\bm{A}_i^{(g)}\right)^{\top}
\bm{A}_j^{(g)}
\right\|_F^2
\right]
=
\frac{r_g^2}{d_g}.
\end{equation}
\subsection{Proof of Lemma 2}
\label{Lemm2}
By definition,
\begin{equation}
\bm{P}_j^{(g)}
=
\bm{A}_j^{(g)}
\left(\bm{A}_j^{(g)}\right)^{\top}.
\end{equation}
Therefore,
\begin{align}
\bm{B}_{i,t}^{(g)}
\left(\bm{A}_i^{(g)}\right)^{\top}
\bm{P}_j^{(g)}
=&
\bm{B}_{i,t}^{(g)}
\left(\bm{A}_i^{(g)}\right)^{\top}
\bm{A}_j^{(g)}
\left(\bm{A}_j^{(g)}\right)^{\top}.
\end{align}
Since $\bm{A}_j^{(g)}$ has orthonormal columns, $\|(\bm{A}_j^{(g)})^{\top}\|_2=1$. The submultiplicativity of the Frobenius norm gives
\begin{equation}
\left\|
\bm{B}_{i,t}^{(g)}
\left(\bm{A}_i^{(g)}\right)^{\top}
\bm{P}_j^{(g)}
\right\|_F
\leq
\|\bm{B}_{i,t}^{(g)}\|_F
\left\|
\left(\bm{A}_i^{(g)}\right)^{\top}
\bm{A}_j^{(g)}
\right\|_2 .
\end{equation}
Squaring both sides proves the lemma.
\subsection{Proof of Lemma 3}
\label{Lemm3}
Since $\bm{W}$ is doubly stochastic and $\bm{J}$ is the projection onto $\bm{1}$, we have $\bm{W}\bm{J} = \bm{J}\bm{W} = \bm{J}$.
For any $\bm{x}$,
\begin{align}
& \|\bm{W}\bm{x} - \bm{J}\bm{x}\|_F \\
&= \|\bm{W}\bm{x} - \bm{J}\bm{W}\bm{x}\|_F \nonumber\\
&= \|(\bm{W} - \bm{J})\bm{W}\bm{x}\|_F \nonumber\\
&= \|(\bm{W} - \bm{J})(\bm{x} - \bm{J}\bm{x})\|_F
\leq \|\bm{W} - \bm{J}\|_2 \cdot \|\bm{x} - \bm{J}\bm{x}\|_F,
\end{align}
where the last line uses $\bm{W}\bm{x} - \bm{J}\bm{x} = (\bm{W} - \bm{J})(\bm{x} - \bm{J}\bm{x})$ (since $(\bm{W} - \bm{J})\bm{J} = \bm{0}$) and the submultiplicativity of the spectral norm with the Frobenius norm.
The second claim ($\|\bm{W} - \bm{J}\|_2 < 1$ for a connected graph under the Metropolis rule) is standard, and follows from the Perron-Frobenius theorem for primitive stochastic matrices.
\subsection{Proof of Theorem 2}
\label{Theorem2}

Define the per-round centralized optimization error and the device disagreement as
\begin{align}
\Delta_k &= \bigl\|\bar{\bm{B}}_k^{(\ell)} - \bm{B}^{(\ell),\star}(e_t)\bigr\|_2^2, \label{eq:prf_delta}
\end{align}
and 
\begin{align}
D_k &= \frac{1}{|\mathcal{M}{t+1}|}\sum{i\in\mathcal{M}{t+1}} \bigl\|\bm{B}{i,k}^{(\ell)} - \bar{\bm{B}}_k^{(\ell)}\bigr\|_2^2. \label{eq:prf_D}
\end{align}
By $L$-smoothness (Assumption~2), the event loss gap is bounded by the parameter distance:
\begin{equation}\label{eq:prf_smooth}
\mathcal{E}_{\mathrm{evt}}^{(\ell)}(e_t,K) \leq \frac{L}{2}\,\Delta_K.
\end{equation}

Between two successive aggregation rounds, each device performs $n$ local gradient steps. Under the $\mu$-PL condition and $L$-smoothness (Assumption~2), $n$ consecutive gradient steps contract the centralized optimality gap by $(1-\eta_c\mu)^n = q$~\cite{karimi2016linear}. Coupling local contraction, objective mismatch, and disagreement via standard DGD analysis~\cite{yuan2016convergence} yields
\begin{equation}\label{eq:prf_delta_recur}
\Delta_{k+1} \leq q^{2}\,\Delta_k + c_{1}\,\eta_c^{2} n^{2}\,\tilde{\zeta}^{2} + c_{2}\,D_k.
\end{equation}

The aggregation step with mixing matrix $\bm{W}t^{(\ell)}$ contracts disagreement at rate $\bar{\rho} = \max{0\leq t<K}\|\bm{W}_t^{(\ell)} - \bm{J}\|_2 < 1$ (Lemma~3). The per-round disagreement evolves as
\begin{equation}\label{eq:prf_D_recur}
D_{k+1} \leq \bar{\rho}^{2}\,D_k + c_{3}\,\eta_c^{2} n^{2}\,\mathcal{I}_k.
\end{equation}

Iterating~\eqref{eq:prf_D_recur} from round $0$ to $k-1$ gives
\begin{equation}\label{eq:prf_D_unrolled}
D_k \leq \bar{\rho}^{2k} D_0 + c_{3}\,\eta_c^{2} n^{2} \sum_{s=0}^{k-1} \bar{\rho}^{2(k-1-s)} \mathcal{I}_s.
\end{equation}

Then, substituting~\eqref{eq:prf_D_unrolled} into~\eqref{eq:prf_delta_recur} and iterating over $K$ rounds,
\begin{align}
\Delta_K &\leq q^{2K}\Delta_0 \nonumber\\
&\quad + c_{1}\eta_c^{2} n^{2}\tilde{\zeta}^{2} \sum_{k=0}^{K-1} q^{2(K-1-k)} \nonumber\\
&\quad + c_{2} \sum_{k=0}^{K-1} q^{2(K-1-k)} D_k. \label{eq:prf_delta_unrolled}
\end{align}

Since $q^{2}<1$ and $\bar{\rho}^{2}<1$, the cross term $q^{2(K-1-k)}\bar{\rho}^{2k}$ in the third line of~\eqref{eq:prf_delta_unrolled} is bounded by $\max\{q^{2},\bar{\rho}^{2}\}^{K-1}$ up to a multiplicative constant that depends only on the ratio $q^{2}/\bar{\rho}^{2}$. Absorbing this factor into $c_{\mathrm{con},\ell}$, the third line simplifies to
\begin{align}
c_{\mathrm{con},\ell}\, \bar{\rho}^{2K} D_0
\;+\;
c_{\mathrm{con},\ell}\, \eta_c^{2} n^{2} \sum_{s=0}^{K-1} \bar{\rho}^{2(K-1-s)} \mathcal{I}_s. \label{eq:prf_consensus}
\end{align}

Multiply~\eqref{eq:prf_delta_unrolled} by $L/2$ via~\eqref{eq:prf_smooth} and group terms.

\smallskip\noindent\textit{Local residual.} From the first term of~\eqref{eq:prf_delta_unrolled}, $(L/2)\,q^{2K}\Delta_0$. Theorem~1 gives $\mathcal{E}_{\mathrm{evt}}^{(\ell)}(e_t,0) \leq (L/2)\|\bm{r}_0^{(\ell)}\|_2^{2}$. The initial average distance satisfies $\Delta_0 = \|\bar{\bm{B}}_0^{(\ell)} - \bm{B}^{(\ell),\star}\|_2^{2} \leq \|\bm{r}_0^{(\ell)}\|2^{2}$ (the average does not exceed the maximum per-device residual). Defining $\mathcal{C}{\mathrm{init}}^{(\ell)}(e_t) \triangleq \frac{L}{2}\|\bm{r}_0^{(\ell)}\|_2^{2}$, we obtain
\begin{equation}
\frac{L}{2}\,q^{2K}\Delta_0 \;\leq\; q^{2K}\,\mathcal{C}_{\mathrm{init}}^{(\ell)}(e_t) \;\equiv\; \mathcal{R}_{\mathrm{loc}}^{(\ell)}(K,e_t).
\end{equation}

From the second term of~\eqref{eq:prf_delta_unrolled}. The geometric series evaluates to $\sum_{k=0}^{K-1} q^{2(K-1-k)} = (1-q^{2K})/(1-q^{2}) \leq 1/(1-q^{2})$, which does not vanish with $K$ and forms a persistent floor. Setting $c_{\mathrm{het},\ell} \triangleq c_{1}/2$,
\begin{equation}
\frac{L}{2}\cdot\frac{c_{1}\eta_c^{2} n^{2}\tilde{\zeta}^{2}}{1-q^{2}}
\;=\;
L\,\frac{c_{\mathrm{het},\ell}\,\eta_c^{2} n^{2}\tilde{\zeta}^{2}}{1-q^{2}}
\;\equiv\;
\mathcal{R}_{\mathrm{het}}^{(\ell)}(K,e_t).
\end{equation}

From the third term, after the simplification in~\eqref{eq:prf_consensus} and multiplying by $L/2$, with $\mathcal{D}_0$ substituting $D_0$ (they are identical by definition~\eqref{eq:prf_D}),
\begin{align}
\mathcal{R}_{\mathrm{con}}^{(\ell)}(K,e_t)
&\triangleq\;
c_{\mathrm{con},\ell}L\,
\bar{\rho}^{2K} \mathcal{D}_0^{(\ell)}(e_t) \nonumber\\
&\quad +\; c_{\mathrm{con},\ell}L\,
\eta_c^{2} n^{2}\sum_{s=0}^{K-1}
\bar{\rho}^{2(K-1-s)} \mathcal{I}_s^{(\ell)}(e_t).
\end{align}

Finally, we define irreducible residual as
\begin{equation}
\mathcal{C}_{\mathrm{stat}}^{(\ell)}(e_t)
\;\triangleq\;
\liminf_{K\to\infty}\;
\min_{\Pi_\ell,\;\mathcal{E}_t^{(\ell)}}
\mathcal{E}_{\mathrm{evt}}^{(\ell)}(e_t,K),
\end{equation}
which collects errors that no finite-round policy can remove: finite-sample noise, model approximation error, and the inherent mismatch between the best attainable adapter and the event oracle.

\medskip\noindent Summing the four residuals, we obtain
\begin{equation}
\mathcal{E}_{\mathrm{evt}}^{(\ell)}(e_t,K)
\!\!\;\leq\;\!\!
\mathcal{R}_{\mathrm{loc}}^{(\ell)}(K,e_t)
+ \mathcal{R}_{\mathrm{con}}^{(\ell)}(K,e_t)
+ \mathcal{R}_{\mathrm{het}}^{(\ell)}(K,e_t)
+ \mathcal{C}_{\mathrm{stat}}^{(\ell)}(e_t).
\end{equation}
This ends the proof.

\bibliographystyle{IEEEtran}
\renewcommand{\baselinestretch}{0.9}
\bibliography{zotero_dynamic_refs,tnse_extra_refs}

\end{document}